\newcommand{\figscale}[2]{\includegraphics[scale=#2,clip=false]{#1}}
\newcommand{\field}[1]{\mathbb{#1}}
\newcommand{\R}{\field{R}}
\newcommand{\E}{\field{E}}
\newcommand{\Ind}[1]{\field{I}\bigl\{{#1}\bigr\}}
\renewcommand{\ss}{\subseteq}
\newcommand{\norm}[1]{\left\|{#1}\right\|}
\newcommand{\diag}{\mathrm{Diag}}
\newcommand{\bx}{\boldsymbol{x}}
\newcommand{\bv}{\boldsymbol{v}}
\newcommand{\sL}{{L_s}}
\newcommand{\spin}{\{-1,+1\}}
\newcommand{\scC}{\mathcal{C}}
\newcommand{\cyc}{\mathrm{Cyc}}
\newcommand{\cut}{\mathrm{Cut}}
\newcommand{\scO}{\mathcal{O}}
\newcommand{\scE}{\mathcal{E}}
\newcommand{\tp}{\textsc{TreePartition}}
\newcommand{\ep}{\textsc{EdgePartition}}
\newcommand{\path}{\mathrm{Path}}
\renewcommand{\Pr}{\field{P}}
\newcommand{\scP}{\mathcal{P}}
\newcommand{\hf}{\widehat{f}}
\newcommand{\lambdamin}{\lambda_{\mathrm{min}}}
\newcommand{\Eflip}{ E_{\mathrm{flip}} }
\newcommand{\alg}{\textsc{scccc}}
\newcommand{\algb}{\textsc{cccc}}
\newtheorem{theorem}{Theorem}   
\newtheorem{lemma}[theorem]{Lemma}
\newtheorem{remark}{Remark}
\newtheorem{proposition}{Proposition}
\title{{\bf A Correlation Clustering Approach to Link Classification in Signed Networks\\
-- Full Version --}}
\author{
Nicol\`o Cesa-Bianchi\\ 
Dipartimento di Informatica, Universit\`a degli Studi di Milano, Italy\\
\texttt{nicolo.cesa-bianchi@unimi.it}
\and
Claudio Gentile\\ 
DiSTA, Universit\`a dell'Insubria, Italy\\
\texttt{claudio.gentile@uninsubria.it}
\and
Fabio Vitale\\ 
Dipartimento di Informatica, Universit\`a degli Studi di Milano, Italy\\
\texttt{fabio.vitale@unimi.it}
\and
Giovanni Zappella\\
Dipartimento di Matematica, Universit\`a degli Studi di Milano, Italy\\
\texttt{giovanni.zappella@unimi.it}
}
\begin{document}

\maketitle

\begin{abstract}
Motivated by social balance theory, we develop a theory of link classification in signed networks using the correlation clustering index as measure of label regularity. We derive learning bounds in terms of correlation clustering within three fundamental transductive learning settings: online, batch and active. Our main algorithmic contribution is in the active setting, where we introduce a new family of efficient link classifiers based on 
covering the input graph with small circuits. These are the first active algorithms for link classification with mistake bounds that hold for arbitrary signed networks.
\end{abstract}

%\begin{keywords}
%Online learning, transductive learning, active learning, social balance theory.
%\end{keywords}

\section{Introduction}
Predictive analysis of networked data ---such as the Web, online social networks, or biological networks--- is a vast and rapidly growing research area whose applications include spam detection, product recommendation, link analysis, and gene function prediction. Networked data are typically viewed as graphs, where the presence of an edge reflects a form of semantic similarity between the data associated with the incident nodes. Recently, a number of papers have started investigating networks where links may also represent a negative relationship. For instance, disapproval or distrust in social networks, negative endorsements on the Web, or inhibitory interactions in biological networks. Concrete examples from the domain of social networks and e-commerce are Slashdot, where users can tag other
users as friends or foes, Epinions, where users can give positive or negative ratings not only to products, but also to other users, and Ebay, where users develop trust and distrust towards agents operating in the network. Another example is the social network of Wikipedia administrators, where votes cast by an admin in favor or against the promotion of another admin can be viewed as positive or negative links.
The emergence of signed networks has attracted attention towards the problem of edge sign prediction or \textsl{link classification}. This is the task of determining whether a given relationship between two nodes is positive or negative. In social networks, link classification may serve the purpose of inferring the sentiment between two individuals, an information which can be used, for instance, by recommender systems.

Early studies of signed networks date back to the Fifties. For example, \cite{ha53} and~\cite{ch56} model dislike and distrust relationships among individuals as negatively weighted edges in a graph.
The conceptual context is provided by the theory of {\em social balance}, formulated as a way to understand the origin and the structure of conflicts in a network of individuals whose mutual relationships can be classified as friendship or hostility \cite{hei46}.
The advent of online social networks has witnessed a renewed interest
in such theories, and has recently spurred a significant amount of work ---see, e.g., \cite{GKRT04,KLB09,LHK10b,cntd11,fia11}, and references therein.
According to social balance theory, the regularity of the network depends on the presence of ``contradictory'' cycles. The number of such bad cycles is tightly connected to the correlation clustering index of \cite{BBC04}. This index is defined as the smallest number of sign violations that can be obtained by clustering the nodes of a signed graph in all possible ways. A sign violation is created when the incident nodes of a negative edge belong to the same cluster, or when the incident nodes of a positive edge belong to different clusters. Finding the clustering with the least number of violations is known to be NP-hard \cite{BBC04}.

In this paper, we use the correlation clustering index as a learning bias for the problem of link classification in signed networks. As opposed to the experimental nature of many of the works that deal with link classification
in signed networks, we study the problem from a learning-theoretic standpoint.
We show that the correlation clustering index characterizes the prediction complexity of link classification in three different supervised transductive learning settings.
In online learning, the optimal mistake bound (to within logarithmic factors) is attained by Weighted Majority run over a pool of instances of the Halving algorithm. We also show that this approach cannot be implemented efficiently under standard complexity-theoretic assumptions. In the batch (i.e., train/test) setting, we use standard uniform convergence results for transductive learning \cite{EP09} to show that the risk of the empirical risk minimizer is controlled by the correlation clustering index. We then observe that known efficient approximations to the optimal clustering can be used to obtain polynomial-time (though not practical) link classification algorithms.
In view of obtaining a practical and accurate learning algorithm, we then focus our attention to the notion of two-correlation clustering derived from the original formulation of structural balance due to Cartwright and Harary. This kind of social balance, based on the observation that in many social contexts ``the enemy of my enemy is my friend'', is used by known efficient and accurate heuristics for link classification, like the least eigenvalue of the signed Laplacian and its variants \cite{KLB09}. The two-correlation clustering index is still hard to compute, but the task of designing good link classifiers sightly simplifies due to the stronger notion of bias. In the active learning protocol, we show that the two-correlation clustering index bounds from below the test error of any active learner on any signed graph. Then, we introduce the first efficient active learner for link classification with performance guarantees (in terms of two-correlation clustering) for any signed graph. Our active learner receives a query budget as input parameter, requires time $\scO\bigl(|E|\sqrt{|V|}\ln|V|\bigr)$ to predict the edges of any graph $G=(V,E)$, and is relatively easy to implement.

% CG: it would be nice if active results were online too ...

\section{Preliminaries}\label{s:prel}
We consider undirected graphs $G = (V,E)$ with unknown edge labeling $Y_{i,j} \in \spin$ for each $(i,j) \in E$. 
Edge labels of the graph are collectively represented by the associated signed adjacency matrix $Y$, 
where $Y_{i,j}=0$ whenever $(i,j) \not\in E$. The edge-labeled graph $G$ will henceforth be 
denoted by $(G,Y)$.
Given $(G,Y)$, the cost of a partition of $V$ into clusters is the number of 
negatively-labeled within-cluster edges plus the number of positively-labeled between-cluster edges. 
%
%\iffalse %%%%%%%%%%%%%%%%%%%%%%%%%%%%%%%%%%%%%%%%%%%%%%%%%%%%%%%%%%%%%%%%%%%%%%%%%%%%%%%%%%%
\begin{figure}[t!]
\begin{center}
\figscale{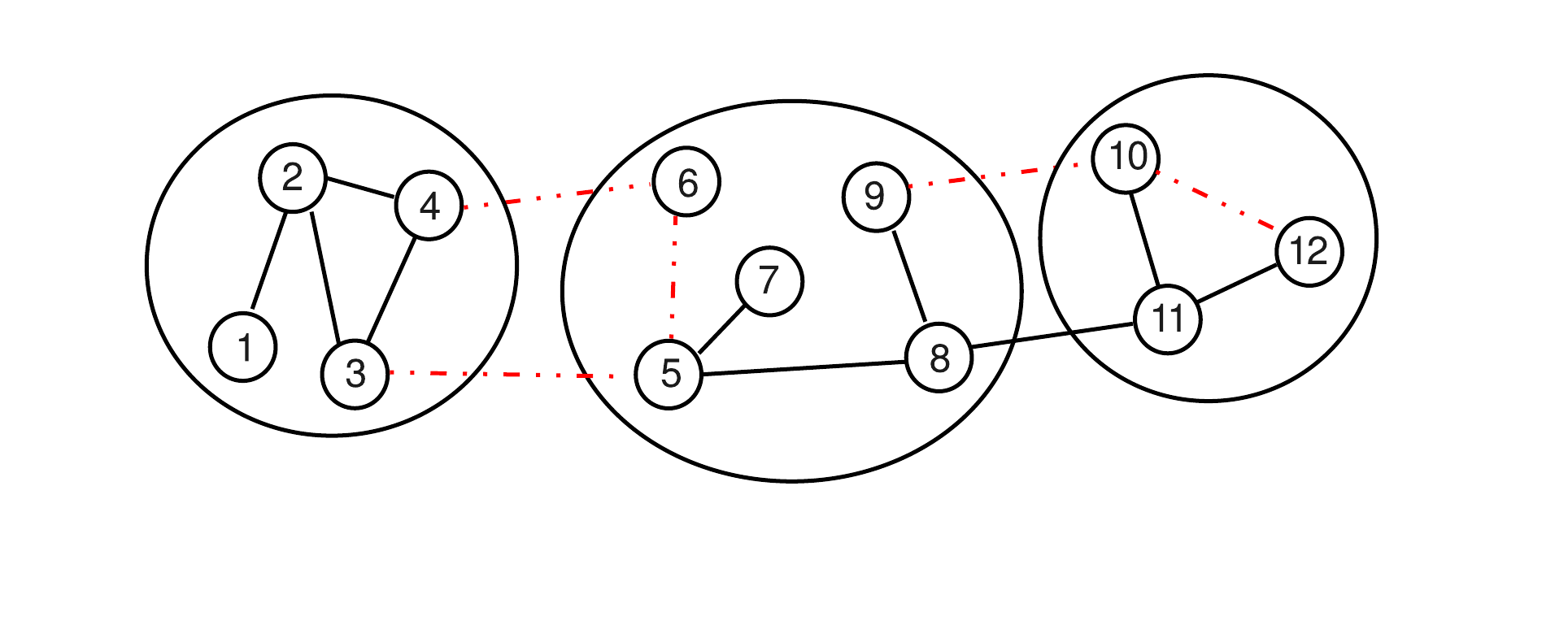}{0.6}
\end{center}
\caption{
\label{f:1}
An edge-labeled undirected graph $(G,Y)$ whose nodes $V = \{1, \ldots, 12 \}$ 
are partitioned in three clusters.
Black solid edges are positive, red dashed edges are negative. 
The cost of the displayed partition is three due to edges $(5,6)$, $(8,11)$, and $(10,12)$. 
Hence $\Delta(Y) \leq 3$. 
There are two bad cycles: $8-9-10-11$ and $10-11-12$, 
hence $\Delta(Y)$ cannot be zero. In fact, in order to remove all 
bad cycles, it suffices to delete edge $(10,11)$. By Proposition~\ref{fa:1}, 
this yields $\Delta(Y) = 1$. A corresponding optimal partition of $V$ is $\bigl\{\{1,2,3,4\}, \{6\}, \{5,7,8,9,11,12\}, \{10\} \bigr\}$.
}
\end{figure}
%\fi %%%%%%%%%%%%%%%%%%%%%%%%%%%%%%%%%%%%%%%%%%%%%%%%%%%%%%%%%%%%%%%%%%%%%%%%%%%%%%%%%%%%%%%%%
%
% {\bf Figure with $\Delta_2$ ??}
%
We measure the regularity of an edge labeling $Y$ of $G$ through the 
\textsl{correlation clustering index} $\Delta(Y)$. This is defined
as the minimum over the costs of all partitions of $V$. Since the cost of a given partition of $V$
is an obvious quantification of the consistency of the associated clustering, $\Delta(Y)$ quantifies the
cost of the best way of partitioning the nodes in $V$. Note that the number of clusters is not
fixed ahead of time.
In the next section, we relate this regularity measure to the optimal number of prediction mistakes in edge classification problems.

%Some relevant combinatorial properties of $\Delta(Y)$ are the following ---see, e.g.,~\cite{DEFI06}.

A \textsl{bad cycle} in $(G,Y)$ is a simple cycle (i.e., a cycle with no repeated nodes, except the first one) 
containing exactly one negative edge.
Because we intuitively expect a positive link between two nodes be adjacent to
another positive link (e.g., the transitivity of a friendship relationship
between two individuals),\footnote
{
Observe that, as far as $\Delta(Y)$ is concerned, this need not be true for negative links ---see Proposition~\ref{fa:1}.
This is because the definition of $\Delta(Y)$ does not constrain the number
of clusters of the nodes in $V$. %, see Figure~\ref{f:1} for an example. 
}
bad cycles are a clear source of irregularity of the edge labels. The following fact 
relates $\Delta(Y)$ to bad cycles ---see, e.g., \cite{DEFI06}. Figure \ref{f:1} gives a pictorial illustration.
\begin{proposition}\label{fa:1}
For all $(G,Y)$, $\Delta(Y)=0$ iff there are no bad cycles. Moreover, $\Delta(Y)$ is the smallest number of edges that must be removed from $G$ in order to delete all bad cycles.
\end{proposition}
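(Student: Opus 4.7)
The plan is to prove both claims by translating between partitions of $V$ and edge-removal sets, using positive connected components as the canonical clustering.

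\textbf{Step 1: Absence of bad cycles implies $\Delta(Y)=0$.} Suppose $(G,Y)$ has no bad cycles. Let $G^+$ be the subgraph of positively labeled edges, and let the clusters of the partition $\mathcal{P}$ be exactly the connected components of $G^+$. Positive edges are within-cluster by construction, so they do not contribute to the cost. It remains to show that no negative edge has both endpoints inside the same cluster. If such a negative edge $(u,v)$ existed, there would be a positive path from $u$ to $v$ in $G^+$; choosing a shortest such path and appending $(u,v)$ yields a simple cycle with exactly one negative edge, i.e., a bad cycle---a contradiction. Hence $\mathcal{P}$ has zero cost and $\Delta(Y)=0$.

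\textbf{Step 2: $\Delta(Y)=0$ implies absence of bad cycles.} If a zero-cost partition $\mathcal{P}$ exists, then every positive edge is within a cluster and every negative edge crosses clusters. In any hypothetical bad cycle, the unique negative edge $(u,v)$ would satisfy $u,v$ in different clusters, but the complementary positive path would keep us within a single cluster (as every edge on it is within-cluster), forcing $u,v$ to be in the same cluster. Contradiction.

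\textbf{Step 3: $\Delta(Y)$ equals the minimum number $\delta(Y)$ of edges whose removal kills all bad cycles.} For $\Delta(Y)\le\delta(Y)$, take a minimum set $E'$ of edges with $|E'|=\delta(Y)$ whose removal destroys every bad cycle. Apply Step 1 to the labeled graph $(V,E\setminus E',Y)$ to obtain a partition $\mathcal{P}$ whose cost with respect to $E\setminus E'$ is zero. Using $\mathcal{P}$ on the full graph $(G,Y)$ can only incur violations on edges of $E'$, so the cost is at most $|E'|=\delta(Y)$. Conversely, for $\delta(Y)\le\Delta(Y)$, take an optimal partition $\mathcal{P}^*$ achieving $\Delta(Y)$, and let $E^*$ be its set of violating edges, with $|E^*|=\Delta(Y)$. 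In $G\setminus E^*$ every positive edge stays within a cluster and every negative edge crosses clusters, so the argument of Step 2 (now applied to $G\setminus E^*$) shows no bad cycle survives. Hence $\delta(Y)\le|E^*|=\Delta(Y)$.

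\textbf{Main obstacle.} The only subtle point is the \emph{simple} cycle requirement inside Step 1: we need that the positive path joining the endpoints of an internal negative edge can be chosen without repeated vertices, so that adjoining the negative edge yields a genuine simple cycle. Picking a shortest positive path between the endpoints (which exists since they lie in the same component of $G^+$) suffices, since a shortest path in a simple graph is automatically vertex-simple and, as a positive path, cannot contain the negative edge itself. Everything else is bookkeeping between partitions and their violating edge sets.
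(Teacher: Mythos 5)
Your proof is correct and complete. Note that the paper itself does not prove Proposition~\ref{fa:1}: it is quoted as a known fact with a pointer to the correlation clustering literature (\cite{DEFI06}), so there is no internal argument to compare against; your write-up supplies the missing self-contained verification. Your route is the natural one: taking the clusters to be the connected components of the positive subgraph gives ``no bad cycles $\Rightarrow \Delta(Y)=0$''; a zero-cost partition excludes bad cycles because the all-positive part of such a cycle keeps the endpoints of its unique negative edge in one cluster while the negative edge forces them apart; and the identity with the minimum edge-deletion number follows by translating in both directions, applying your Step 1 to the graph pruned of a minimum deletion set (paying at most one violation per deleted edge) and your Step 2 to the graph pruned of the violation set of an optimal partition. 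You also correctly isolate and resolve the only delicate point, namely that the positive path closing the cycle in Step 1 can be chosen shortest, hence vertex-simple, and cannot contain the negative edge, so the resulting cycle is simple with exactly one negative edge as the definition of bad cycle requires.
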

%
%\vspace{-0.1in}
Since the removal of an edge can delete more than one bad cycle, $\Delta(Y)$ is upper bounded
by the number of bad cycles in $(G,Y)$. For similar reasons, $\Delta(Y)$ is also lower bounded 
by the number of {\em edge-disjoint} bad cycles in $(G,Y)$.
%
% Figure~\ref{f:1} illustrates many of the notions we have just recalled.
%
We now show\footnote
{
Due to space limitations, all proofs are given in the appendix.
} 
that $Y$ may be very irregular on dense graphs, where $\Delta(Y)$ may take values as big as $\Theta(|E|)$. 
\begin{lemma}\label{fact1}
Given a clique $G = (V,E)$ and any integer $0 \le K \le \tfrac{1}{6}(|V|-3)(|V|-4)$, there exists an edge labeling 
$Y$ such that $\Delta(Y) = K$.
\end{lemma}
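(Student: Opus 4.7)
The plan is to construct a labeling $Y$ whose correlation clustering index is controlled exactly by $K$ pairwise edge-disjoint bad triangles. The case $K=0$ is trivial: I would label every edge $+1$, so no bad cycle exists and hence $\Delta(Y)=0$ by Proposition~\ref{fa:1}.

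For $K\ge 1$, the crux is to exhibit $K$ pairwise edge-disjoint triangles $T_1,\dots,T_K$ in the clique on $V$. The numerical bound $(|V|-3)(|V|-4)/6$ is tuned to a Steiner triple system argument: letting $m$ be the largest integer with $m\le |V|$ and $m\equiv 1 \text{ or } 3 \pmod 6$, a direct case check over the six residues of $|V|$ modulo $6$ gives $m\ge |V|-3$. A Steiner triple system on any $m$ chosen vertices of $V$ then partitions the $\binom{m}{2}$ edges among them into $m(m-1)/6 \ge (|V|-3)(|V|-4)/6 \ge K$ pairwise edge-disjoint triangles, from which I would retain any $K$. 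In each $T_i$ I would select one edge $e_i$ and set $Y_{e_i}=-1$, labeling every other edge of the clique $+1$. Since the $T_i$ are edge-disjoint, the $e_i$ are distinct, and each $T_i$ contains exactly one negative edge, making it a bad triangle.

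To check $\Delta(Y)=K$, for the upper bound observe that deleting $\{e_1,\dots,e_K\}$ leaves an all-positive subgraph, which has no bad cycles, so Proposition~\ref{fa:1} yields $\Delta(Y)\le K$. For the lower bound, the $T_i$ form $K$ pairwise edge-disjoint bad cycles; any edge set whose removal destroys all bad cycles must include at least one edge per $T_i$, forcing $\Delta(Y)\ge K$. The main obstacle is the combinatorial packing step: the expression $(|V|-3)(|V|-4)/6$ is chosen precisely to fit within the near-Steiner-triple-system capacity of $K_{|V|}$ for every residue of $|V| \pmod 6$; once that packing is in hand, the rest of the argument is straightforward bookkeeping through Proposition~\ref{fa:1}.
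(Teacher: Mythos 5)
Your proposal is correct and follows essentially the same route as the paper: both pass to a sub-clique on at least $|V|-3$ vertices whose edge set decomposes into edge-disjoint triangles (your Steiner triple system is exactly the decomposition result the paper cites), place one negative edge in each of $K$ of these triangles, and then get $\Delta(Y)\le K$ from Proposition~\ref{fa:1} and $\Delta(Y)\ge K$ from the $K$ edge-disjoint bad cycles. No gaps.
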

%
%{\bf CG: Any other desirable properties of $\Delta(Y)$ we want to mention? Robustness?}

The restriction of the correlation clustering index to two clusters only leads
to measure the regularity of an edge labeling $Y$ through $\Delta_2(Y)$,
i.e., the minimum cost over all {\em two-cluster} partitions of $V$.
Clearly, $\Delta_2(Y) \ge \Delta(Y)$ for all $Y$.
The fact that, at least in social networks, $\Delta_2(Y)$ tends to be small is motivated by the Cartwright-Harary theory of structural balance (``the enemy of my enemy is my friend'').\footnote{
Other approaches consider different types of local structures, like the contradictory triangles of \cite{LHK10}, or the longer cycles used in \cite{cntd11}.
}
On signed networks, this corresponds to the following {\em multiplicative rule}: 
$Y_{i,j}$ is equal to the product of signs on the edges of \textsl{any} path connecting $i$ to $j$. It is easy 
to verify that, if the multiplicative rule holds for all paths, then $\Delta_2(Y) = 0$. 
It is well known that $\Delta_2$ is related to the \textsl{signed Laplacian} matrix $\sL$ of $(G,Y)$.
Similar to the standard graph Laplacian, the signed Laplacian is defined as 
$\sL = D - Y$, where $D = \diag(d_1,\dots,d_n)$ is the diagonal matrix of node degrees.
Specifically, we have
\begin{equation}\label{eq:mincut}
\frac{4\Delta_2(Y)}{n} = \min_{\bx\in\spin^n} \frac{\bx^{\top}\sL\bx}{\norm{\bx}^2}~.
\end{equation}
Moreover,  $\Delta_2(Y) = 0$ is equivalent to $|\sL| = 0$ ---see, e.g.,~\cite{Hou05}. 
Now, computing $\Delta_2(Y)$ is still NP-hard~\cite{GG06}. Yet, 
%
% hence one expects that hardness results akin
% to Theorem~\ref{t:hardness} hold for standard online and batch algorithms operating 
% with $\Delta_2$ as well.
%
because~(\ref{eq:mincut}) resembles an eigenvalue/eigenvector computation, 
\cite{KLB09} and other authors 
have looked at relaxations similar to those used in spectral graph clustering~\cite{von07}. 
If $\lambdamin$ denotes the smallest eigenvalue of $\sL$, then~(\ref{eq:mincut}) allows one to write
\begin{equation}\label{eq:mincut-relaxed}
\lambdamin =   \min_{\bx\in\R^n} \frac{\bx^{\top}\sL\bx}{\norm{\bx}^2} \le \frac{4\Delta_2(Y)}{n}~.
\end{equation}
%
%It is a known fact in the theory of signed graphs that $\Delta_2(Y) = 0$ is 
% equivalent to $|\sL| = 0$ ---see, e.g.,~\citep{Hou05}. 
As in practice one expects $\Delta_2(Y)$ to be strictly positive, 
solving the minimization problem in~(\ref{eq:mincut-relaxed}) amounts to finding an eigenvector 
$\bv$ associated with the smallest eigenvalue of $\sL$. 
The {\em least eigenvalue heuristic} builds $\sL$ out of the training edges only, computes
the associated minimal eigenvector $\bv$,
uses the sign of $\bv$'s components to define a two-clustering of the nodes, 
and then follows this two-clustering to classify all the remaining edges:
Edges connecting nodes with matching signs are classified $+1$, otherwise they
are $-1$. In this sense, this heuristic resembles the transductive risk minimization procedure
described in Subsection~\ref{ss:rade}.
However, no theoretical guarantees are known for such spectral heuristics.

When $\Delta_2$ is the measure of choice, a bad cycle is any simple cycle
containing an {\em odd number} of negative edges. Properties similar to those stated in Lemma~\ref{fact1} can be proven for this new notion of bad cycle.
% As an example, the cycle $3-4-5-6$ in Figure~\ref{f:1} is a bad cycle for $\Delta_2$, since there is no way of clustering the four involved nodes in two clusters avoiding inter- or intra-cluster  violations. The same cycle is not a bad cycle for $\Delta$, since $\Delta$ does not restrict the number of clusters. We exploit the $\Delta_2$ inductive bias to design principled algorithmic approaches to active learning.

\section{Mistake bounds and risk analysis}
\label{s:halving}
In this section, we study the prediction complexity of classifying the links of a signed network
in the online and batch transductive settings. Our bounds are expressed in terms of the correlation
clustering index $\Delta$. The $\Delta_2$ index will be used in Section~\ref{s:active} in the context 
of active learning.

\subsection{Online transductive learning}
We first show that, disregarding computational aspects, 
$\Delta(Y)+|V|$ characterizes (up to log factors)
the optimal number of edge classification mistakes in the online trandsuctive learning protocol. 
In this protocol, the edges of the graph are presented to the learner according to an arbitrary and unknown order 
$e_1,\dots,e_T$, where $T = |E|$. At each time $t=1,\dots,T$ the learner receives
edge $e_t$ and must predict its label $Y_t$. Then $Y_t$ is revealed and the learner knows whether 
a mistake occurred. The learner's performance is measured by the total number of prediction mistakes 
on the worst-case order of edges. 
% Note that any learner designed for this protocol can also be used in the more standard 
% batch setting where edges are split into training and test set.
% Simply, run the online learner on any order of the training edges. 
% Then, by definition of the online protocol, the learner's internal model must be able to provide a 
% classification label for each of the remaining test edges.
Similar to standard approaches to node classification in networked data \cite{HP07,CGV09a,CGVZ10a,CGVZ10b,VCGZ12}, 
we work within a transductive learning setting. This means that the learner has preliminary access to the entire graph structure $G$ where labels in $Y$ are absent.
We start by showing a lower bound that holds for \textsl{any} online edge classifier, and
then we prove how the lower bound can be strengthened if the graph is dense.
\begin{theorem}\label{t:lower}
For any $G = (V,E)$, any $K \ge 0$, and any online edge classifier, there exists an edge labeling $Y$
on which the classifier makes at least
$
    |V| - 1 + K
$
mistakes, while $\Delta(Y) \leq K$.
\end{theorem}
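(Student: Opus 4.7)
The natural approach is to design an adversarial online strategy whose mistake count decomposes into two parts: $|V|-1$ mistakes ``for free'' (contributing nothing to $\Delta(Y)$) coming from a spanning tree, plus $K$ additional mistakes coming from $K$ carefully chosen non-tree edges (each of which can contribute at most one unit to the correlation clustering cost). I will describe the adversary explicitly and then verify $\Delta(Y) \le K$ by exhibiting a partition that witnesses it.

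First, fix any spanning tree $T$ of $G$ and have the adversary present its $|V|-1$ edges first, in any order; for each tree edge, after observing the learner's prediction, the adversary reveals the opposite label. This forces $|V|-1$ mistakes. Since $T$ is acyclic, the labeled tree contains no bad cycle, so by Proposition~\ref{fa:1} any labeling of $T$ has $\Delta = 0$. Concretely, let $\scP$ be the partition of $V$ given by the connected components of the subgraph of $T$ restricted to positive edges. Then every positive tree edge is within-cluster, and every negative tree edge is between-cluster (removing a negative tree edge disconnects $T$, and the positive-edge components only refine this split), so the contribution of the tree edges to the cost of $\scP$ is zero.

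Next, assuming $|E| \ge |V|-1+K$ (else the statement is vacuous), the adversary presents $K$ non-tree edges and again labels each opposite to the learner's prediction; this yields $K$ further mistakes. The remaining non-tree edges are then labeled to agree with $\scP$ (positive within a cluster, negative between clusters), so they contribute nothing to the cost of $\scP$. The only possible violations under $\scP$ are thus among the $K$ adversarially labeled non-tree edges, each contributing at most one unit, giving cost at most $K$. Hence $\Delta(Y) \le K$ while the classifier makes $|V|-1+K$ mistakes.

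The argument is essentially bookkeeping rather than hard analysis: the one conceptual step to pin down carefully is the definition of the witness partition $\scP$ and the verification that arbitrary labelings of a tree are consistent with $\Delta = 0$, so that adding $K$ extra edges raises $\Delta$ by at most $K$. No counting over multiple partitions or probabilistic arguments are needed.
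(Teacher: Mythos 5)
Your proof is correct and follows essentially the same route as the paper's: force $|V|-1$ mistakes on a spanning tree, define the partition via positive-edge components of the tree, force $K$ more mistakes on non-tree edges while labeling the rest consistently with that partition. The only cosmetic difference is that you bound the cost of the witness partition directly by $K$, whereas the paper first fixes a labeling with $\Delta(Y)=0$ and then notes that flipping $K$ edges raises $\Delta$ by at most $K$; these are the same argument.
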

%
%\vspace{-0.15in}
%
%Since the adversary can always force $|V|-1$ mistakes while keeping $\Delta(Y)=0$,
%this learning model is not very significant when $\Delta(Y) = \mathcal{O}\bigl(|V|\bigr)$.
%
% The next result shows that 
%The above lower bound can be strengthened if the graph is dense:
%
\begin{theorem}\label{t:clique}
For any clique graph $G = (V,E)$, any  $K \ge 0$, and any online classifier, there exists an edge labeling $Y$
on which the classifier makes at least
$
|V| + \max\Bigl\{ K,\ K\log_2\tfrac{|E|}{2K}\Bigr\} + \Omega(1)
$
mistakes, while $\Delta(Y) \leq K$.
\end{theorem}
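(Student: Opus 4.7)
Plan. The plan is to strengthen the adversarial construction used in Theorem~\ref{t:lower} by exploiting the density of the clique to embed a halving-type subgame on top of the basic spanning-tree argument. The first $|V|-1$ mistakes come from the usual spanning-tree phase; the additional $K\log_2(|E|/(2K))$ mistakes come from a second phase that uses the many non-tree edges (of which the clique has at least $|E|/2$) to keep the ``baseline'' clustering persistently ambiguous to the learner, even while the final labeling satisfies $\Delta(Y)\le K$. The fallback to the $|V|-1+K$ bound of Theorem~\ref{t:lower} handles the $\max\{K,\,\cdot\,\}$ case, so I only need to attack the $K\log_2(|E|/(2K))$ regime.

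Key steps, in order. (i) Phase one: present the $|V|-1$ edges of a Hamiltonian path in $G$ one at a time, and on each edge reveal the label opposite to the learner's prediction; this forces $|V|-1$ mistakes, and since a tree has no cycles, the revealed labels are compatible with $\Delta = 0$ (so the constraint $\Delta(Y)\le K$ is not yet binding). (ii) Phase two: present the remaining non-tree edges in an order chosen by the adversary, while maintaining a version space $V_t$ of full labelings with $\Delta(Y)\le K$ consistent with the labels revealed so far; at each step, the adversary reveals the label opposite to the learner's prediction whenever some $Y\in V_t$ realizes that label. (iii) Exhibit a Littlestone tree of depth $\Omega(K\log_2(|E|/(2K)))$ for the class $\{Y:\Delta(Y)\le K\}$ on the clique (conditional on the Phase-one labels). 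Such a tree certifies that for at least $K\log_2(|E|/(2K))$ of the edges presented in Phase two, both labels are still realized by some labeling in $V_t$, so the learner is genuinely uncertain and can be forced into a mistake. (iv) Add the two phases: $|V|-1$ from Phase one plus $K\log_2(|E|/(2K))$ from Phase two, with the $\Omega(1)$ term absorbing the $|V|-1$ vs.\ $|V|$ gap and any lower-order rounding.

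Main obstacle. The crux is step (iii): building a Littlestone tree of the required depth. The naive route — use the class of ``tree-consistent-plus-$K$-flips'' labelings and appeal to its size being $\binom{|E|-|V|+1}{K}\ge (|E|/(2K))^K$ — fails, because for this class the learner can simply predict the baseline determined by the Phase-one labels and incur at most $K$ mistakes; its Littlestone dimension is $K$, not $K\log_2(|E|/(2K))$. To obtain the $\log$ factor one must work inside the richer class $\{Y:\Delta(Y)\le K\}$, in which the baseline clustering itself is never pinned down: because the clique is dense, many distinct clusterings remain compatible with any partial labeling, and tree edges revealed in Phase one can be ``charged'' as violations of an alternative clustering whose non-tree baseline is different. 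The technical work is to use this flexibility to exhibit, at each of $\Omega(K\log_2(|E|/(2K)))$ successive tree nodes, a non-tree edge both of whose labels are still realized in $V_t$ under the global constraint $\Delta(Y)\le K$, and the density of the clique is what makes this branching possible.
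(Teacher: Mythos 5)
Your high-level architecture matches the paper's: a Hamiltonian-path phase forcing $|V|-1$ mistakes, a fallback to the strategy of Theorem~\ref{t:lower} when $K$ is large relative to $|V|$, and a second phase that is supposed to extract $K\log_2\tfrac{|E|}{2K}$ further mistakes from a halving-type adversary inside the class $\{Y:\Delta(Y)\le K\}$. You have also correctly diagnosed why the naive ``baseline plus $K$ flips'' class cannot work (its Littlestone dimension is only $K$). But the proposal stops exactly at the point where the proof actually lives: your step (iii) asserts the existence of a Littlestone tree of depth $\Omega\bigl(K\log_2\tfrac{|E|}{2K}\bigr)$ conditional on the Phase-one labels, and your ``main obstacle'' paragraph concedes that constructing it is ``the technical work'' without carrying it out. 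As it stands, step (iii) is a restatement of the theorem, not an argument, so there is a genuine gap.

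The paper fills this gap with a concrete gadget. After Phase 1 it takes the larger cluster $V_0$ (of size at least $|V|/2$), inside which the revealed and forced labels yield a long all-$+1$ line, and cuts out $K$ edge-disjoint sub-lines, each with $2^{\lfloor\log_2(|V|/(2K))\rfloor}$ edges; each sub-line is closed into a cycle by labeling the clique edge joining its endpoints $-1$. On each such cycle (one $-1$ edge, all others $+1$) the still-unlabeled clique edges joining ``opposite'' nodes play the role of queries in a binary search for the position of the single $\delta$-edge: whichever label the learner predicts for such a chord, the adversary can answer the opposite while remaining consistent with some placement of the $\delta$-edge in one half of the current candidate interval, so each forced mistake halves the interval and costs nothing extra in $\Delta$. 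This yields $\lfloor\log_2\tfrac{|V|}{2K}\rfloor$ mistakes per cycle at a total price of one unit of $\Delta$ per cycle, hence $K\log_2\tfrac{|V|}{2K}$ extra mistakes with $\Delta(Y)\le K$, and the adversary then completes the labeling without creating new bad cycles. If you want to salvage your write-up, you need to supply this (or an equivalent) explicit shattering construction: in particular you must exhibit, edge by edge, the two consistent completions with $\Delta\le K$ that keep both labels alive, and verify edge-disjointness across the $K$ copies so the $\Delta$ budget adds up — none of which follows from density of the clique alone as invoked in your sketch.
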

The above lower bounds are nearly matched by a standard version space algorithm: the Halving algorithm ---see, e.g., 
\cite{Lit89a}.
When applied to link classification, the Halving algorithm with parameter $d$, denoted by $\textsc{hal}_d$, 
%is a version space algorithm that 
predicts the label of edge $e_t$ %fed to the algorithm at time $t$
as follows: Let $S_t$ be the number of labelings $Y$ consistent with the observed edges and such that 
$\Delta(Y)=d$ (the version space). 
$\textsc{hal}_d$ predicts $+1$ if the majority of these labelings assigns $+1$ to $e_t$. 
The $+1$ value is also predicted as a default value if either $S_t$ is empty or there is a tie. 
%
%$\textsc{hal}_d$ predicts the default value $+1$.
%
Otherwise the algorithm predicts $-1$.
Now consider the instance of Halving run with parameter $d^* = \Delta(Y)$ for the true unknown labeling $Y$. 
Since the size of the version space halves after each mistake, this algorithm makes at most $\log_2|S^*|$ 
mistakes, where $S^* = S_1$ is the initial version space of the algorithm.
If the online classifier runs the Weighted Majority algorithm of~\cite{LW94} 
over the set of at most $|E|$
experts corresponding to instances of $\textsc{hal}_d$ for all possible values $d$
of $\Delta(Y)$ (recall Lemma~\ref{fact1}), we easily obtain the following.
\begin{theorem}\label{t:wm}
Consider the Weighted Majority algorithm using $\textsc{hal}_1,\dots,\textsc{hal}_{|E|}$ as experts. The number of mistakes made by this algorithm when run over an arbitrary permutation of edges of a given signed graph $(G,Y)$ is at most of the order of
$
    \bigl(|V|+\Delta(Y)\bigr)\log_2\tfrac{|E|}{\Delta(Y)}
$.
\end{theorem}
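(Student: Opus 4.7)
The plan is to stack the classical mistake bound for Weighted Majority on top of the Halving guarantee for the expert that happens to be tuned to the correct value of $\Delta(Y)$.

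First, I would invoke the Weighted Majority mistake bound of~\cite{LW94}: against $N$ experts, Weighted Majority makes at most $O(M^{\star}+\log N)$ mistakes, where $M^{\star}$ is the mistake count of the best expert in the pool. In our setting $N=|E|$, and the best expert is $\textsc{hal}_{d^{\star}}$ with $d^{\star}=\Delta(Y)$, since the true labeling $Y$ belongs to its initial version space and hence never gets eliminated.

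Second, I would apply the standard Halving bound. Each mistake of a version-space algorithm at least halves the set of consistent hypotheses, so $\textsc{hal}_{d^{\star}}$ makes at most $\log_{2}|S^{\star}|$ mistakes, where $S^{\star}=\{\,Y':\Delta(Y')=d^{\star}\,\}$ is its initial pool. Everything therefore reduces to upper bounding $|S^{\star}|$.

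Third, I would bound $|S^{\star}|$ by combinatorial counting. By Proposition~\ref{fa:1}, any $Y'\in S^{\star}$ is obtained by taking some $\Delta=0$ labeling $Y_{0}$ and flipping the signs of exactly $d^{\star}$ edges. A $\Delta=0$ labeling is determined by a partition of $V$ (positive within clusters, negative between), so the number of such $Y_{0}$ is at most the Bell number $B_{|V|}$; and there are $\binom{|E|}{d^{\star}}$ choices of flipped edges. Hence
\[
\log_{2}|S^{\star}|\;\le\;\log_{2}B_{|V|}+\log_{2}\binom{|E|}{d^{\star}}\;=\;O\!\Bigl(|V|\log|V|+\Delta(Y)\log\tfrac{|E|}{\Delta(Y)}\Bigr).
\]
Combining this with the Weighted Majority bound of step one (whose additional $O(\log|E|)$ is of lower order) delivers a mistake bound of the claimed order $(|V|+\Delta(Y))\log_{2}(|E|/\Delta(Y))$, after noting that in the relevant regime $|E|/\Delta(Y)\gtrsim|V|$ the $|V|\log|V|$ summand is absorbed into $|V|\log(|E|/\Delta(Y))$.

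The main obstacle is precisely step three: the Bell-number bound on the number of $\Delta=0$ labelings is tight for cliques but loose for sparse graphs with moderate $\Delta$. Matching the statement cleanly in all regimes calls for a tighter parameterization of $\Delta=0$ labelings --- for instance, encoding one by a labeling of a spanning tree of $G$ together with only those non-tree edges whose sign is not already forced by the no-bad-cycle condition on the associated fundamental cycle --- so as to replace the $|V|\log|V|$ factor by a $|V|\log(|E|/\Delta(Y))$ factor. This is the sole place where the analysis needs care; the Weighted Majority and Halving ingredients themselves are off-the-shelf.
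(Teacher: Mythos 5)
Your proposal matches the paper's proof essentially step for step: Weighted Majority over the pool $\textsc{hal}_1,\dots,\textsc{hal}_{|E|}$, the Halving bound $\log_2|S^*|$ for the expert tuned to $d^*=\Delta(Y)$, and the version-space count $|S^*|\le B_{|V|}\binom{|E|}{d^*}$ obtained by identifying each consistent labeling with a partition of $V$ plus a choice of $d^*$ edges. The regime concern in your final paragraph is not resolved in the paper either: its proof stops at $\log_2|S^*| < d^*\log_2\tfrac{e|E|}{d^*} + |V|\log_2\tfrac{|V|}{\ln(|V|+1)}$ and states the theorem ``of the order of'' without the spanning-tree refinement you sketch, so no further work is needed to match the paper's argument.
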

Comparing Theorem~\ref{t:wm} to Theorem~\ref{t:lower} and Theorem~\ref{t:clique}
provides our characterization of the prediction complexity of link classification
in the online transductive learning setting.

\paragraph{Computational complexity.}
Unfortunately, as stated in the next theorem, the Halving algorithm for link classification 
is only of theoretical relevance, due to its computational hardness. 
In fact, this is hardly surprising, since $\Delta(Y)$ itself is NP-hard to compute~\cite{BBC04}.
%
% Let Correlation Clustering (CC) be the problem of finding the clustering whose cost equals the correlation clustering index on an instance $(G,Y)$. Let UCC (Unique Correlation Clustering) be the variant of CC in which the optimal clustering is known to be unique.
%
\begin{theorem}\label{t:hardness}
The Halving algorithm cannot be implemented in polytime unless $\mathrm{RP} = \mathrm{NP}$. 
\end{theorem}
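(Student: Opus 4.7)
My plan is to argue by a randomized polynomial-time reduction from an $\mathrm{NP}$-complete problem. Suppose Halving runs in polynomial time; then for any partial labeling $(E_0,Y_0)$, integer $d$, and edge $e^*\notin E_0$, I have polynomial-time access to $\mathrm{MAJ}(E_0,Y_0,d,e^*)$, the majority label of $e^*$ across the version space $V_d(E_0,Y_0)=\{Y:\,Y|_{E_0}=Y_0,\,\Delta(Y)=d\}$, with ties and empty version spaces resolved to $+1$. Since \cite{BBC04} establishes that deciding $\Delta(Y)\le K$ is $\mathrm{NP}$-complete, and since this decision reduces to asking, for some $d\in\{0,\dots,K\}$ with $E_0=E$ and $Y_0=Y$, whether $V_d(E_0,Y_0)$ is non-empty, it suffices to reduce this existence problem to $\mathrm{MAJ}$ queries with constant success probability.

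The reduction proceeds in three steps. First, for each candidate $k\in\{0,\dots,|E|\}$, reveal $k$ uniformly random edges with uniformly random labels, augmenting $(E_0,Y_0)$ into $(E_0',Y_0')$; a Valiant--Vazirani hashing analysis ensures that, when $V_d(E_0,Y_0)$ is non-empty and $k\approx\log_2|V_d(E_0,Y_0)|$, the restricted space $V_d(E_0',Y_0')$ has cardinality exactly one with probability bounded away from zero. Second, given such isolation, query $\mathrm{MAJ}(E_0',Y_0',d,e)$ at every unrevealed $e$; the oracle then returns the unique consistent bit, yielding a candidate labeling $Y^*$. Third, verify that $Y^*\in V_d(E_0,Y_0)$ by querying $\mathrm{MAJ}$ with $Y^*$ revealed on all but one edge $e$ at a time, scanning over $d$.

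The main obstacle is precisely this verification, because $\mathrm{MAJ}$'s $+1$ default masks three cases (empty, tied, and majority-$+1$) indistinguishably. Circumventing it exploits two structural facts: for any single-edge flip $\widetilde{Y}^*_e$ of a candidate $Y^*$ we have $|\Delta(Y^*)-\Delta(\widetilde{Y}^*_e)|\le 1$; and if $\Delta(Y^*)=d>0$ then an optimal clustering must contain a violating edge whose flip strictly decreases $\Delta$, forcing the version space on that one-edge perturbation to be a singleton and $\mathrm{MAJ}$ to return a non-default bit unambiguously. Tracking the values of $d$ at which $-1$ outputs appear pins down $\Delta(Y^*)$ up to the residual sign of the witnessing edge, and combining this local analysis with the Valiant--Vazirani isolation yields a one-sided randomized polynomial-time decider for the $\mathrm{NP}$-complete existence problem. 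Standard $\mathrm{RP}$ amplification then gives $\mathrm{NP}\subseteq\mathrm{RP}$, i.e.\ $\mathrm{RP}=\mathrm{NP}$.
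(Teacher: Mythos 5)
Your plan departs from the paper's argument and has genuine gaps. First, the problem you feed to the oracle is muddled: with $E_0=E$ and $Y_0=Y$ (the setting matching the NP-hardness you cite from \cite{BBC04}), the version space $V_d(E,Y)$ is already empty or a singleton and there are \emph{no} unrevealed edges, so your isolation and recovery steps are vacuous---there is nothing to isolate and nothing to query. If instead $(E_0,Y_0)$ is a proper partial labeling so that steps one and two make sense, then the existence problem you are deciding (``does a completion with $\Delta=d$ exist?'') is not the problem whose hardness you invoked, and you never establish its hardness. Second, the isolation step is not a legitimate use of Valiant--Vazirani: the isolation lemma needs pairwise-independent hash constraints (random parities), whereas revealing $k$ random edges with random labels is random coordinate fixing, which does not in general leave a unique survivor with probability bounded away from zero. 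The impossibility of imposing parity-type constraints through label queries is precisely why the paper does not randomize inside the reduction at all: it reduces from a \emph{unique-solution} promise problem (Unique Maximum Clique, via Vertex Cover and Multicut to Correlation Clustering, preserving uniqueness of the optimum), so the Valiant--Vazirani randomness is absorbed into the hardness of the source problem and $\mathrm{RP}=\mathrm{NP}$ follows from that.

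Third, your verification step does not actually pin down $\Delta(Y^*)$. A $-1$ answer on a held-out edge $e$ at level $d$ is sound, but it determines $\Delta(Y^*)=d$ only when $Y^*_e=-1$; when every violated edge of an optimal clustering is positive, each informative answer only tells you $\Delta(\widetilde{Y}^*_e)=d$, hence $\Delta(Y^*)\in\{d-1,d+1\}$, and re-querying the same held-out edge at the neighbouring levels returns the uninformative $+1$ default, so the ambiguity need not be resolvable. For the threshold decision $\Delta(Y)\le K$ this residual window is fatal whenever $K$ falls inside it, and you give no completeness or one-sided-error analysis for the final ``decider.'' Contrast this with how the paper neutralizes the $+1$ default: it duplicates the graph, joins the two copies by $+1$ edges, and leaves the copy's internal edges unlabeled; because the optimal clustering of the source instance is unique, at level $d^*=\Delta$ the set of consistent completions of those unlabeled edges is a genuine singleton, the correct $d^*$ is found by running $\textsc{hal}_d$ for increasing $d$ and checking consistency of its predictions against the original labels, and Halving's forced predictions then read off the optimal clustering unambiguously. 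To salvage your oracle-based route you would need (i) a hardness result for the partial-labeling completion problem or a uniqueness gadget of this kind, and (ii) a correct isolation or uniqueness mechanism in place of random coordinate fixing.
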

%
% The proof (see the appendix) provides a reduction from Unique Maximum Clique to UCC (through Vertex Cover and Multicut), and then from UCC to link classification. 

\subsection{Batch transductive learning}
\label{ss:rade}
We now prove that $\Delta$ can also 
be used to control the number of prediction mistakes in the batch transductive setting. 
In this setting, given a graph $G = (V,E)$ with unknown labeling $Y\in\spin^{|E|}$ and 
correlation clustering index $\Delta = \Delta(Y)$, the learner observes the labels of a 
random subset of $m$ training edges, and must predict the labels of the remaining $u$ test edges, 
where $m+u=|E|$.

Let $1,\dots,m+u$ be an arbitrary indexing of the edges in $E$. We represent the random set 
of $m$ training edges by the first $m$ elements $Z_1,\dots,Z_m$ in a random permutation 
$Z = (Z_1,\dots,Z_{m+u})$ of $\{1,\dots,m+u\}$. 
Let $\scP(V)$ be the class of all partitions of $V$ and $f\in\scP$ denote a specific (but arbitrary) 
partition of $V$. Partition $f$ predicts the sign of an edge $t\in\{1,\dots,m+u\}$ using 
$\hf(t)\in\spin$, where $\hf(t) = 1$ if $f$ puts the vertices incident to the $t$-th edge of $G$ in 
the same cluster, and $-1$ otherwise.
For the given permutation $Z$, we let $\Delta_m(f)$ denote the cost of the partition 
$f$ on the first $m$ training edges of $Z$ with respect to the underlying edge labeling $Y$. 
In symbols,
$
	\Delta_m(f) = \sum_{t=1}^m \bigl\{\hf(Z_t) \neq Y_{Z_t}\bigr\}.
$
%
% where $\hf(t) = 1$ if the partition $f$ puts the vertices incident to the 
% $t$-th edge of $G$ in the same cluster, and $-1$ otherwise.
%
Similarly, we define $\Delta_u(f)$ as the cost of $f$ on the last $u$ test edges of $Z$,
$
	\Delta_u(f) = \sum_{t=m+1}^{m+u} \bigl\{\hf(Z_t) \neq Y_{Z_t}\bigr\}~.
$
We consider algorithms that, given a permutation $Z$ of the edges, find a partition $f\in\scP$ 
approximately minimizing $\Delta_m(f)$. For those algorithms,  
we are interested in bounding the number $\Delta_u(f)$ of mistakes made when using 
$f$ to predict the test edges $Z_{m+1},\dots,Z_{m+u}$. 
In particular, as for more standard empirical risk minimization schemes, 
we show a bound on the number of mistakes made when predicting the test 
edges using a partition that approximately minimizes $\Delta$ on the training edges.

The result that follows is a direct consequence of \cite{EP09}, and holds for 
any partition that approximately minimizes the correlation clustering index on the training set.
\begin{theorem}\label{th:rade}
Let $(G,Y)$ be a signed graph with $\Delta(Y) = \Delta$.  
Fix $\delta\in (0,1)$, and let $\hf^*\in\scP(V)$ be such that
$
	\Delta_m(\hf^*) \le \kappa\,\min_{f\in\scP}\Delta_m(f)
$
for some $\kappa \ge 1$.
If the permutation $Z$ is drawn uniformly at random, then there exist constants $c, c' > 0$ such that
\begin{center}
$
\frac{1}{u}\Delta_u(\hf^*) 
\le 
\frac{\kappa}{m+u}\Delta + c\sqrt{\left(\frac{1}{m} 
+ \frac{1}{u}\right)\left(|V|\ln|V|+\ln\frac{2}{\delta}\right)}
+ c'\kappa\sqrt{\frac{u/m}{m+u}\ln\frac{2}{\delta}}
$
\end{center}
%\vspace{-0.05in}
holds with probability at least $1-\delta$.
\end{theorem}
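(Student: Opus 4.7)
The plan is to combine the transductive uniform convergence bound of~\cite{EP09}, applied to the finite hypothesis class $\scP(V)$ of partitions, with a concentration argument for the training cost of a fixed optimal partition. I view each $f\in\scP(V)$ as a $\spin$-valued classifier on edges via the rule $\hf(t)=+1$ iff the endpoints of edge $t$ lie in the same cluster of $f$. The key combinatorial input is that the number of partitions of $V$ is bounded by the Bell number $B(|V|)\le|V|^{|V|}$, so $\ln|\scP(V)|\le|V|\ln|V|$.

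Invoking the transductive Rademacher bound of~\cite{EP09} and controlling the Rademacher complexity of a finite class by Massart's lemma, with probability at least $1-\delta/2$ the inequality
\begin{equation*}
\frac{1}{u}\Delta_u(f)-\frac{1}{m}\Delta_m(f)\le c\sqrt{\left(\frac{1}{m}+\frac{1}{u}\right)\bigl(|V|\ln|V|+\ln\tfrac{2}{\delta}\bigr)}
\end{equation*}
holds simultaneously for every $f\in\scP(V)$. Specialising this to $f=\hf^*$ yields the second term in the theorem, up to an appropriate control on $\tfrac{1}{m}\Delta_m(\hf^*)$ which is obtained below.

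To bound $\tfrac{1}{m}\Delta_m(\hf^*)$, I fix any partition $f^\star_Y\in\scP(V)$ achieving $\Delta(Y)=\Delta$, so that $f^\star_Y$ misclassifies exactly $\Delta$ edges over all of $E$. The near-minimality assumption on $\hf^*$ gives $\Delta_m(\hf^*)\le\kappa\,\Delta_m(f^\star_Y)$. Since $Z_1,\dots,Z_m$ is a uniformly random $m$-subset of $E$, linearity of expectation yields $\E[\Delta_m(f^\star_Y)]=\tfrac{m}{m+u}\Delta$, and a Hoeffding--Serfling-type concentration inequality for sampling without replacement, applied to the $\{0,1\}$ mistake indicators of $f^\star_Y$, gives with probability at least $1-\delta/2$
\begin{equation*}
\frac{1}{m}\Delta_m(f^\star_Y)\le\frac{\Delta}{m+u}+c'\sqrt{\frac{u/m}{m+u}\ln\tfrac{2}{\delta}}.
\end{equation*}
Multiplying by $\kappa$, chaining with the uniform convergence inequality above, and taking a union bound over the two $\delta/2$ events then delivers exactly the bound claimed in the theorem.

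The main obstacle is bookkeeping rather than novelty: one must check that the transductive Rademacher bound from~\cite{EP09} really produces a complexity term of the exact shape $\sqrt{(1/m+1/u)(\ln|\scP(V)|+\ln(1/\delta))}$, and that Serfling's inequality applied to the mistake indicators of $f^\star_Y$ yields precisely the $\sqrt{(u/m)/(m+u)}$ dependence appearing in the third term. Both steps are standard, but the constants must be tracked carefully through the two concentration arguments so that they merge into the single $\delta$ probability statement with the two separate additive error terms displayed in the theorem.
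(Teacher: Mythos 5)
Your proposal is correct and follows essentially the same route as the paper: a uniform convergence bound over the finite class of partitions (the paper uses Remark~2 of \cite{EP09} plus a union bound, with the same $|V|\ln|V|$ count of partitions), combined with a sampling-without-replacement concentration bound for the training cost of a fixed optimal partition (the paper invokes Remark~3 of \cite{EP09}, which plays the role of your Hoeffding--Serfling step), chained via the $\kappa$-approximate minimality of $\hf^*$. The only differences are cosmetic (Massart/Serfling stated explicitly versus cited remarks, and a slightly cleaner $\delta/2$--$\delta/2$ split than the paper's own bookkeeping).
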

We can give a more concrete instance of Theorem~\ref{th:rade} 
by using the polynomial-time algorithm of~\cite{DEFI06} which finds a partition $\hf^*\in\scP$ such that
\(
    \Delta_m(\hf^*) \le 3\ln(|V|+1)\min_{f\in\scP}\Delta_m(f)~.
\)
Assuming for simplicity $u = m = \tfrac{1}{2}|E|$, the bound of Theorem~\ref{th:rade} can be rewritten as
\begin{center}
$
\Delta_u(\hf^*) 
\le 
\frac{3}{2}\ln(|V|+1)\Delta + 
\scO\left( \sqrt{|E|\,\left(|V|\,\ln|V| + \ln\frac{1}{\delta}\right)} 
+ \ln|V|\,\sqrt{|E|\,\ln\frac{1}{\delta}}\right)~. 
$
\end{center}
This shows that, when training and test set sizes are comparable, 
approximating $\Delta$ on the training set to within a factor $\ln|V|$ yields 
at most order of $\Delta\ln|V| + \sqrt{|E|\,|V|\,\ln|V|}$ errors on the test set.
Note that for moderate values of $\Delta$ the uniform convergence term $\sqrt{|E|\,|V|\,\ln|V|}$ becomes dominant in the bound.\footnote
{
A very similar analysis can be carried out using $\Delta_2$ instead of $\Delta$. In this case the uniform convergence term
is of the form $\sqrt{|E|\,|V|}$.
}
Although in principle any approximation algorithm with a nontrivial performance guarantee 
can be used to bound the risk, we are not aware of algorithms that are reasonably easy to 
implement and, more importantly, scale to large networks of practical interest.
% well to large-scale networks.

\section{Two-clustering and active learning}
\label{s:active}
In this section, we exploit the $\Delta_2$ inductive bias to design and analyze 
algorithms in the active learning setting.
%An active link classifier first chooses a certain number of training edges; the labels of these edges are 
%then revealed, and the algorithm uses them to build a predictive model for the remaining 
%test edges. As in feature-based learning, one expects active training set selection to 
%significantly increase the prediction performance compared to passive learning methods 
%operating on a generic training set of the same size.
%
Active learning algorithms work in two phases: a \textsl{selection} phase, where a query 
set of given size is constructed, and a \textsl{prediction} phase, where the algorithm 
receives the labels of the edges in the query set and predicts the labels of the remaining edges. 
In the protocol we consider here the only labels ever revealed to the algorithm are those in the query set. 
In particular, no labels are revealed during the prediction phase. We evaluate our 
active learning algorithms just by the number of mistakes made in the prediction phase as a function
of the query set size.

Similar to previous sections, 
% the results of Section~\ref{s:halving} for online learning and to the results of 
% Section~\ref{s:rade} for passive learning, 
we first show that the prediction complexity of active learning is lower bounded by the 
correlation clustering index, where we now use $\Delta_2$ instead of $\Delta$. 
In particular, any active learning algorithm for link classification that queries 
at most a constant fraction of the edges must err, on any signed graph, on at 
least order of $\Delta_2(Y)$ test edges, for some labeling $Y$.
\begin{theorem}\label{th:lower-active}
For any signed graph $(G,Y)$, any $K \ge 0$, and any active learning algorithm $A$ for 
link classification that queries the labels of a fraction $\alpha \ge 0$ of the edges of $G$, 
there exists a randomized labeling such that the number $M$ of mistakes made by $A$ in 
the prediction phase satisfies
$
	\E\,M \ge \tfrac{1-\alpha}{2}K,
$
while $\Delta_2(Y) \le K$. 
\end{theorem}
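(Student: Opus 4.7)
The plan is to invoke Yao's minimax principle: it suffices to exhibit a distribution over labelings with $\Delta_2(Y) \le K$ against which every deterministic active learner issuing $m = \alpha|E|$ queries makes at least $\tfrac{1-\alpha}{2} K$ mistakes in expectation. The hard distribution is straightforward: draw a uniformly random subset $T \subseteq E$ with $|T| = K$, and set $Y_e = -1$ for $e \in T$ and $Y_e = +1$ for $e \not\in T$. Because the trivial single-cluster partition of $V$ has cost equal to the number of negatively labeled edges, $\Delta_2(Y) \le |T| = K$ on every realization.

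The main step is an adaptivity-to-oblivious reduction. Fix any deterministic adaptive learner, and let $X_t = \Ind{Q_t \in T}$ be the indicator that its $t$-th query lies in $T$. A symmetry observation --- the posterior over $T$ conditioned on past queries and labels is uniform over size-$K$ subsets consistent with the observations --- yields $\Pr(X_t = 1 \mid X_1,\dots,X_{t-1}) = (K - \sum_{j<t} X_j)/(|E| - t + 1)$, irrespective of which edge the learner selects next. This is exactly the law of sampling without replacement from an urn with $K$ marked and $|E|-K$ unmarked balls, so $J := |Q \cap T|$ follows the hypergeometric distribution with parameters $(|E|,K,m)$. In particular $\E J = \alpha K$, and the expected number of unqueried edges in $T$ equals $(1-\alpha)K$.

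From there I analyze the prediction phase. By the same symmetry, every unqueried edge has identical posterior $p = (K - J)/(|E| - m)$ of belonging to $T$, so the Bayes-optimal predictor applies a common sign to all unqueried edges and incurs exactly $\min\bigl(K - J,\, (|E|-m) - (K - J)\bigr)$ mistakes. In the benchmark regime $K \le (1-\alpha)|E|/2$ (where the construction is cleanest; for larger $K$ the bound eventually approaches the trivial ceiling on mistakes), the minimum equals $K - J$ almost surely, yielding $\E M \ge \E[K - J] = (1-\alpha) K$, comfortably exceeding $\tfrac{1-\alpha}{2} K$. The extra factor $\tfrac{1}{2}$ in the statement is exactly what absorbs the $\min(\cdot,\cdot)$ boundary and the concentration slack needed to extend the argument outside this range.

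The main obstacle will be handling the adaptivity of the learner. A surface argument that ``the queries are independent of $T$'' is incorrect, since an adaptive strategy can focus its queries on edges whose labels appear more informative. The conditional-symmetry observation is precisely what rescues the argument: it shows that, no matter which edge the learner selects next, the probability that this query lies in $T$ depends only on the past $X_1,\dots,X_{t-1}$, reducing every adaptive strategy to the same hypergeometric law. Once this is established, the remaining steps --- identifying the Bayes-optimal rule on unqueried edges and massaging the resulting $\min(\cdot,\cdot)$ expression into the form $\tfrac{1-\alpha}{2} K$ --- are essentially routine.
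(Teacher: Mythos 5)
Your core construction is close to the paper's, but it differs in one detail that turns out to matter: you label the $K$ randomly chosen edges deterministically $-1$, whereas the paper labels the random pool of $K$ edges with \emph{random} signs (and everything else $+1$). With random signs, the label of every unqueried pool edge is a fair coin independent of the entire transcript, so \emph{any} prediction rule errs on it with probability exactly $\tfrac12$; combined with the fact that at most $\alpha|E|$ queries can cover, in expectation, at most $\alpha K$ pool edges, this gives $\E\,M \ge \tfrac{1-\alpha}{2}K$ for every $K$ up to $|E|$, with no Bayes-optimality analysis and no case split. Your careful exchangeability/hypergeometric treatment of adaptivity is correct (and is in fact more explicit than the paper, which glosses over this point), and in the regime $K \le (1-\alpha)|E|/2$ your bound $\E\,M \ge (1-\alpha)K$ is even stronger by a factor of $2$.

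The genuine gap is the large-$K$ regime, and it is not just ``routine massaging'': with deterministic $-1$ labels on the pool, the construction itself fails there, not merely the analysis. Take $K = |E|$: your distribution is then the all-negative labeling, and a learner that always predicts $-1$ makes zero mistakes, so no choice of constant (certainly not the factor $\tfrac12$) can ``absorb'' the $\min\bigl(K-J,\,(|E|-m)-(K-J)\bigr)$ boundary. More generally, once $K-J$ can exceed half the test set, the Bayes predictor switches to predicting $-1$ and your lower bound degrades to $(|E|-m)-(K-J)$, which can be arbitrarily small compared to $(1-\alpha)K/2$. The fix is precisely the paper's randomization of the pool labels: it keeps $\Delta_2(Y) \le K$ on every realization (each pool edge contributes at most one violation to the single-cluster partition) while destroying the learner's ability to exploit the known sign of pool edges, thereby removing the $\min(\cdot,\cdot)$ issue altogether. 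As written, your proof establishes the theorem only for $K \le (1-\alpha)|E|/2$ and needs this modification to cover the full range intended by the statement.
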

Comparing this bound to that in Theorem~\ref{t:lower} reveals that the active learning 
lower bound seems to drop significantly. Indeed, because the two learning protocols are incomparable
(one is passive online, the other is active batch) so are the two bounds. Besides, 
Theorem~\ref{t:lower} depends
on $\Delta(Y)$ while Theorem~\ref{th:lower-active} depends on the larger quantity $\Delta_2(Y)$.
%%%%%%%%%%%%%%%%%%%%%%%%%%%%%%%%%%%%%%%%%%%%%%%%%%%%%%%%%%%%%%%%%
Next, we design and analyze two efficient active learning algorithms working 
under different assumptions on the way edges are labeled.
%
% In this section we design active learning algorithms that, similarly to the least 
% eigenvalue heuristic, rely on $\Delta_2(Y)$ being small. However, we are able to prove 
% rigorous bounds on the number of test set mistakes of our algorithms. 
%
%FV com'era prima
%We consider two models of generating $\delta$-edges. In the \textsl{random} model, $\delta$-edges correspond to random variables.
%The labeling $Y$ is selected so that $\Pr\bigl(e \in E_{\Delta}\bigr) \le \frac{\Delta_2}{|E|}$, for each $e \in E$.
%In this model we prove bounds that hold in expectation over this randomization.
%In the \textsl{adversarial} model, the labeling of edges is completely arbitrary and there are no assumptions on $\delta$-edges. 
%FV com'� ora
% According to the $\Delta_2$ bias, labelings with no $\delta$-edges can be viewed as perfectly regular.
Specifically, we consider two models for generating labelings $Y$: $p$-random and adversarial.
In the $p$-\textsl{random} model, an auxiliary labeling $Y'$ is arbitrarily chosen such that $\Delta_2(Y')=0$. Then $Y$ is obtained through a probabilistic perturbation of $Y'$, where
$
\Pr\bigl(Y_e \neq Y'_e \bigr) \le p
$
for each $e\in E$ (note that correlations between flipped labels are allowed) and for some $p \in [0,1)$.
In the \textsl{adversarial} model, $Y$ is completely arbitrary, and corresponds to an arbitrary partition of $V$ 
made up of two clusters.

\subsection{Random labeling}
Let $\Eflip$ denote the subset of edges $e\in E$ such that $Y_e \neq Y'_e$ in the $p$-random model.
The bounds we prove hold in expectation over the perturbation of $Y'$ and depend on $\E |\Eflip|$ rather than $\Delta_2(Y)$.
Clearly, since each label flip can increase $\Delta_2$ by at most one, then $\Delta_2(Y) \le |\Eflip|$.
Moreover, one can show (details are omitted from this version of the paper)
that there exist classes of dense graphs on which $|\Eflip|=\Delta_2$ with high probability.

\sloppypar{
During the selection phase, our algorithm for the $p$-random model queries only the edges of a
spanning tree $T = (V_T,E_T)$ of $G$. In the prediction phase,
the label of any remaining test edge $e' = (i,j)\not\in E_T$ is predicted with the sign of the product over 
all edges along the unique path $\path_T(e')$ between $i$ and $j$ in $T$.
Clearly, if a test edge $e'$ is predicted wrongly, then either 
$e'\in \Eflip$ or $\path_T(e')$ contains at least one edge of $\Eflip$. 
Hence, the number of mistakes $M_T$ made by our active learner 
on the set of test edges $E\setminus E_T$ can be deterministically bounded by
\begin{equation}\label{e:detbound}
    M_T
\le
    |\Eflip| + \sum_{e' \in E\setminus E_T}\sum_{e \in E} \Ind{e \in \path_T(e')} \Ind{e \in \Eflip}
\end{equation}
where $\Ind{\cdot}$ denotes the indicator of the Boolean predicate at argument.
Let $\bigl|\path_T(e')\bigr|$ denote the number of edges in $\path_T(e')$.
A quantity which can be related to $M_T$ is the {\em average stretch} of a spanning tree $T$
which, for our purposes, reduces to\ \ 
$
    \frac{1}{|E|}\left( |V|-1 + \sum_{e' \in E\setminus E_T} \bigl|\path_T(e')\bigr| \right)~.\ 
$
}
A beautiful result of~\cite{EEST10} shows that every connected and unweighted graph has a spanning tree with an average stretch of just $\mathcal{O}\bigl(\log^2|V|\log\log|V|\bigr)$. Moreover, this low-stretch tree can be constructed in time $\scO\bigl(|E|\ln|V|\bigr)$.
If our active learner uses a spanning tree with the same low stretch, then the following result can be easily proven.
\begin{theorem}\label{th:randomadv}
% Assume the edge-labeling $Y$ of $G$ is drawn such that $\Pr\bigl(e \in E_{\Delta}\bigr) = 
% \frac{\Delta_2}{|E|}$ for each $e \in E$.
Let $(G,Y)$ be labeled according to the $p$-random model and assume
the active learner queries the edges of a spanning tree $T$
with average stretch $\mathcal{O}\bigl(\log^2|V|\log\log|V|\bigr)$. Then\ 
\(
    \E\,M_T \le p |E| \times \mathcal{O}\bigl(\log^2|V|\log\log|V|\bigr)~.
\)
\end{theorem}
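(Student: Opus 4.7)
The plan is to start from the deterministic mistake bound (\ref{e:detbound}) already established in the excerpt and simply take expectations over the random perturbation that generates $Y$ from $Y'$. Since the spanning tree $T$ is chosen by the algorithm independently of the perturbation, the only randomness on the right-hand side of (\ref{e:detbound}) lives in the indicators $\Ind{e\in\Eflip}$, and by definition of the $p$-random model we have $\Pr(e\in\Eflip)\le p$ for every $e\in E$.

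First I would bound the two terms of (\ref{e:detbound}) separately. Linearity of expectation immediately yields $\E|\Eflip|=\sum_{e\in E}\Pr(e\in\Eflip)\le p|E|$. For the double sum, swapping the order of summation and using again linearity of expectation,
\begin{equation*}
\E\sum_{e'\in E\setminus E_T}\sum_{e\in E}\Ind{e\in\path_T(e')}\Ind{e\in\Eflip}
=\sum_{e'\in E\setminus E_T}\sum_{e\in\path_T(e')}\Pr(e\in\Eflip)
\le p\sum_{e'\in E\setminus E_T}\bigl|\path_T(e')\bigr|.
\end{equation*}

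Next I would connect the remaining sum to the average stretch of $T$. By the definition recalled right before the theorem,
\begin{equation*}
\sum_{e'\in E\setminus E_T}\bigl|\path_T(e')\bigr|
=|E|\cdot\frac{1}{|E|}\Bigl(|V|-1+\sum_{e'\in E\setminus E_T}\bigl|\path_T(e')\bigr|\Bigr)-(|V|-1)
\le |E|\cdot\mathcal{O}\bigl(\log^2|V|\log\log|V|\bigr),
\end{equation*}
where the inequality uses the Elkin--Emek--Spielman--Teng guarantee that $T$ has average stretch $\mathcal{O}(\log^2|V|\log\log|V|)$. Combining the two bounds gives
\begin{equation*}
\E\,M_T\le p|E|+p|E|\cdot\mathcal{O}\bigl(\log^2|V|\log\log|V|\bigr)=p|E|\cdot\mathcal{O}\bigl(\log^2|V|\log\log|V|\bigr),
\end{equation*}
which is the claimed bound.

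There is essentially no hard step here: all the heavy lifting (namely the deterministic bound (\ref{e:detbound}) and the low-stretch tree construction of \cite{EEST10}) is cited from outside. The only mildly delicate point is making sure that the tree $T$ is chosen \emph{before} the perturbation so that $\Pr(e\in\Eflip\mid e\in\path_T(e'))=\Pr(e\in\Eflip)\le p$; this holds because the selection phase only queries labels after $T$ is fixed, and $T$ itself depends only on $G$. Note also that the bound tolerates arbitrary correlations among flips, since we only use linearity of expectation and marginal flip probabilities.
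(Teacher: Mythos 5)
Your proof is correct and follows essentially the same route as the paper's: take expectations in the deterministic bound (\ref{e:detbound}), use $\Pr(e\in\Eflip)\le p$ with linearity of expectation, and control $\sum_{e'\in E\setminus E_T}\bigl|\path_T(e')\bigr|$ via the low-stretch guarantee of \cite{EEST10}. Your added remarks on the independence of $T$ from the perturbation and on tolerating correlated flips are accurate but not needed beyond what the paper's one-line computation already uses.
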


\subsection{Adversarial labeling}
%
% FV com'era prima
% The proof of Theorem \ref{th:randomadv} (see the appendix) heavily relies 
% on the assumption that $\delta$-edges are drawn at random, so that their
% location is independent of the graph topology.
% Moreover, the algorithm works with a fixed budget: only $|V|-1$ edges (those
% of a spanning tree) are queried in the selection phase.
% Next, we introduce a more sophisticated algorithm which addresses both issues:
% it works with adversarially placed $\delta$-edges and with an arbitrary budget
% of edges to query.
The $p$-random model has two important limitations: first, depending on the graph topology, 
the expected size of $\Eflip$ may be significantly larger than $\Delta_2$. Second, the 
tree-based active learning algorithm for this model works with a fixed query budget of $|V|-1$ 
edges (those of a spanning tree). We now introduce a more sophisticated algorithm for the 
adversarial model which addresses both issues: it has a guaranteed mistake bound expressed 
in terms of $\Delta_2$ and works with an arbitrary budget of edges to query.

Given $(G,Y)$, fix an optimal two-clustering of the nodes with cost $\Delta_2=\Delta_2(Y)$. 
Call \textsl{$\delta$-edge} any edge $(i,j) \in E$ whose sign $Y_{i,j}$ disagrees with this optimal 
two-clustering. Namely, $Y_{i,j} = -1$ if $i$ and $j$ belong to the same cluster, or 
$Y_{i,j} = +1$ if $i$ and $j$ belong to different clusters. 
Let $E_\Delta \ss E$ be the subset of $\delta$-edges.

We need the following ancillary definitions and notation.
Given a graph $G = (V_G,E_G)$, and a rooted subtree $T = (V_T,E_T)$ of $G$,
we denote by $T_i$ the subtree of $T$ rooted at node $i \in V_T$.
Moreover, if $T$ is a tree and $T'$ is a subtree of $T$, both being
in turn subtrees of $G$, we let
\(
E_G(T',T)
\)
be the set of all edges of $E_G \setminus E_T$ that link nodes in $V_{T'}$ 
to nodes in $V_{T} \setminus V_{T'}$. Also, for nodes $i, j \in V_G$,
of a signed graph $(G,Y)$, and tree $T$, we denote by $\pi_T(i,j)$ 
the product over all edge signs along the (unique) path 
$\path_T(i,j)$ between $i$ and $j$ in $T$.
Finally, a {\em circuit} $C= (V_C,E_C)$ (with node set 
$V_C \subseteq V_G$ and edge set $E_C \subseteq E_G$) 
of $G$ is a cycle in $G$. We do not insist on the cycle being simple.
%
% The basic idea of our active learning algorithm is the following. 
% Fix an optimal 2-clustering partition of the vertex set $V_G$,
% and let $E_{\Delta} \subseteq E_G$ be the resulting subset of $\delta$-edges.
%
%
\iffalse
*****************************************************************************
bipartition 
$\{V_{\scP}', V_{\scP}''\}$ of the input graph $G$
and let $E_{\scP} \subset E_G$ be the $\delta$-edge set related to $\scP(G)$,
i.e.
\[
E_{\scP} := \{(i,j): y_i y_j Y_{i,j}= -1\} ,
\]  
where, for each vertex $v \in V_G$, $y_v=1$  
if $v \in V_{\scP}'$ and $y_v=-1$ if $v \in V_{\scP}'$. 
******************************************************************************
\fi
%
%
Given any edge $(i,j)$ belonging to at least one circuit $C = (V_C,E_C)$ of $G$, we let $C_{i,j}$ be
the path obtained by removing edge $(i,j)$ from circuit $C$.
If $C$ contains no $\delta$-edges, then it must be the case that $Y_{i,j} = \pi_{C_{i,j}}(i,j)$.
%
% must be equal to the product of the labels of all edges of $E_{C_{i,j}}$.
 
Our algorithm finds a {\em circuit covering} $\scC(G)$ of the input graph $G$,
in such a way that each circuit $C \in \scC(G)$
contains at least one edge $(i_C,j_C)$ belonging solely to circuit $C$.
This edge is included in the test set, whose size is therefore equal to $|\scC(G)|$.
The query set contains all remaining edges. During the prediction phase,
each test label $Y_{i_C,j_C}$ is simply predicted with $\pi_{C_{i_C,j_C}}(i_C,j_C)$.
See Figure \ref{f:treepartition} (left) for an example.

% Once the circuit covering $\scC(G)$ is computed, the query set is made up of the edges
% along all paths $C_{i_C,j_C}$ for all $C \in \scC(G)$, while the test set
% is what remains, i.e., all edges $(i_C,j_C)$, for all $C \in \scC(G)$ 
% (hence the number of test edges turns out to be equal to $|\scC(G)|$).
% During the prediction phase, 
% each test label $Y_{i_C,j_C}$ will then be predicted simply by $\pi_{C_{i_C,j_C}}(i_C,j_C)$.

% the product of the labels of all edges of $E_{C_{i_C,j_C}}$.

For each edge $(i,j)$, let $L_{i,j}$ be the the number of circuits of $\scC(G)$ which
$(i,j)$ belongs to. We call $L_{i,j}$ the \textit{load} of $(i,j)$ induced by $\scC(G)$. 
Since we are facing an adversary, and each $\delta$-edge may give rise to a number
of prediction mistakes which is at most equal to its load, 
one would ideally like to construct a circuit covering $\scC(G)$ minimizing 
$\max_{(i,j) \in E} L_{i,j}$, and such that $|\scC(G)|$ is not smaller than the desired 
test set cardinality.

Our algorithm takes in input a {\em test set-to-query set ratio} $\rho$ and
finds a circuit covering $\scC(G)$ such that: 
(i) $|\scC(G)|$ is the size of the test set, and
(ii) $\frac{|\scC(G)|}{Q - |V_G|+1} \ge \rho$, where $Q$ is the size
of the chosen query set, and
(iii) the maximal load $\max_{(i,j) \in E_G} L_{i,j}$ is $\scO( \rho^{3/2} \sqrt{|V_G|})$.

For the sake of presentation, we first describe a simpler version of our main algorithm. 
This simpler version, called \alg\ (Simplified Constrained Circuit Covering Classifier), 
finds a circuit covering $\scC(G)$ such that 
$\max_{(i,j) \in E} L_{i,j} = \scO( \rho \sqrt{|E_G|})$, and will be used as a subroutine
of the main algorithm. 
 
In a preliminary step, \alg\ draws an {\em arbitrary} spanning tree $T$ of $G$
and queries the labels of all edges of $T$.
Then \alg\ partitions tree $T$ into a small number of 
% $\frac{|E_G|-|V_G|+1}{\theta}$ 
connected components of $T$.
The labels of the edges $(i,j)$ with $i$ and $j$
in the same component are simply predicted by $\pi_T(i,j)$.
This can be seen to be equivalent to create, for each such edge, a circuit made up of
edge $(i,j)$ and $\path_T(i,j)$.
For each component $T'$, the edges in $E_G(T',T)$ are partitioned
into query set and test set satisfying the given test set-to-query set ratio $\rho$,
so as to increase the load of each queried edge in $E_T \setminus E_{T'}$ by only $\scO(\rho)$. 
Specifically, each test edge  $(i,j) \in E_G(T',T)$ lies on a circuit made up of 
edge $(i,j)$ along with a path contained in $T'$, a path contained in $T \setminus T'$, 
and another edge from $E_G(T',T)$. 
A key aspect to this algorithm is the way of partitioning tree $T$ 
% and the edge set $E(T',T)$, given any component $T'$. These methods must 
so as to guarantee that the load of each queried edge is $\scO(\rho\sqrt{|E_G|})$. 

In turn, \alg\ relies on two subroutines, \tp\ and \ep, which we now describe. 
Let $T$ be the spanning tree of $G$ drawn in \alg's preliminary step, 
and $T'$ be any subtree of $T$. Let $i_r$ be an arbitrary vertex belonging to both
$V_T$ and $V_{T'}$ and view both trees as rooted at $i_r$.
\begin{figure}[t!]
\begin{center}
\figscale{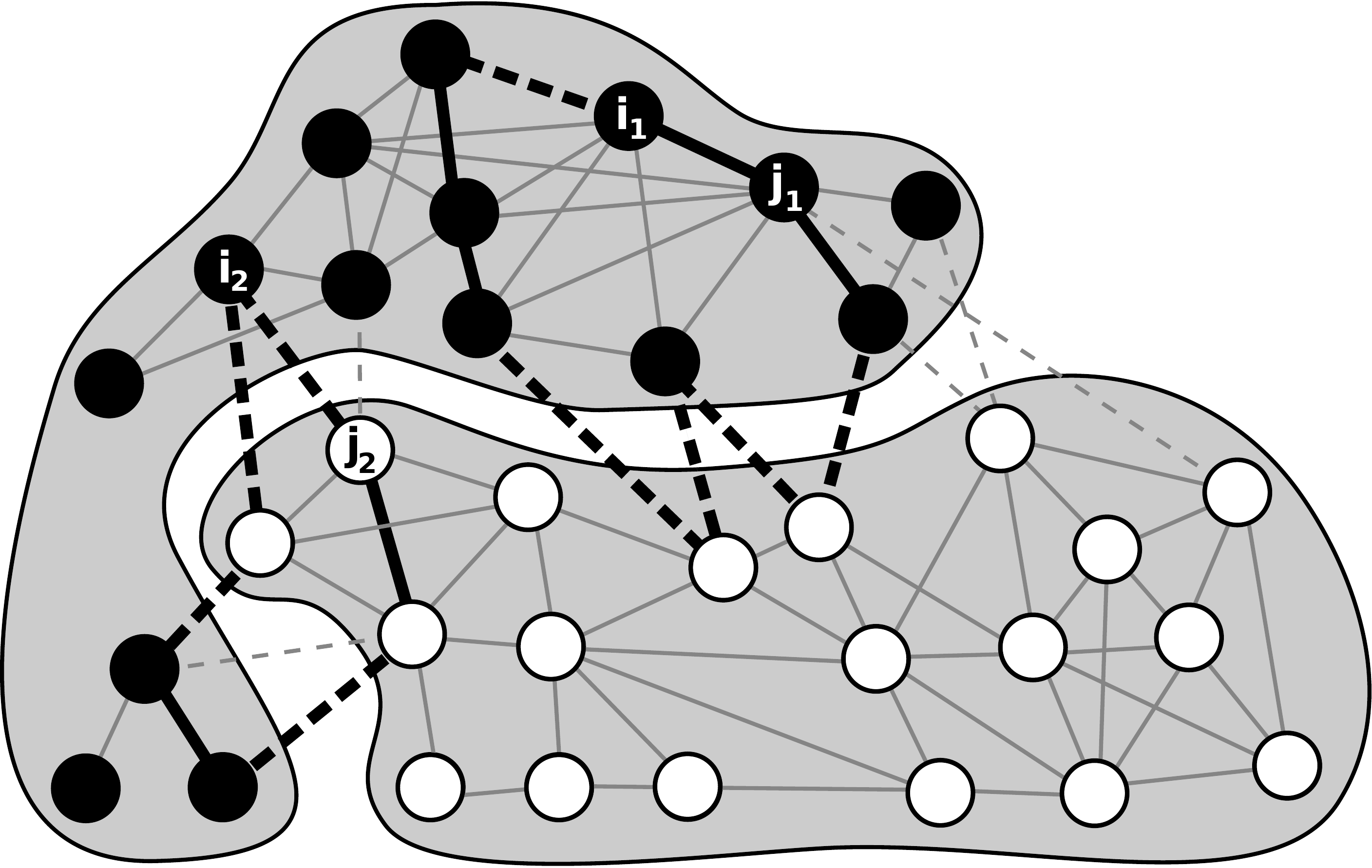}{0.24}\ \ \ 
\figscale{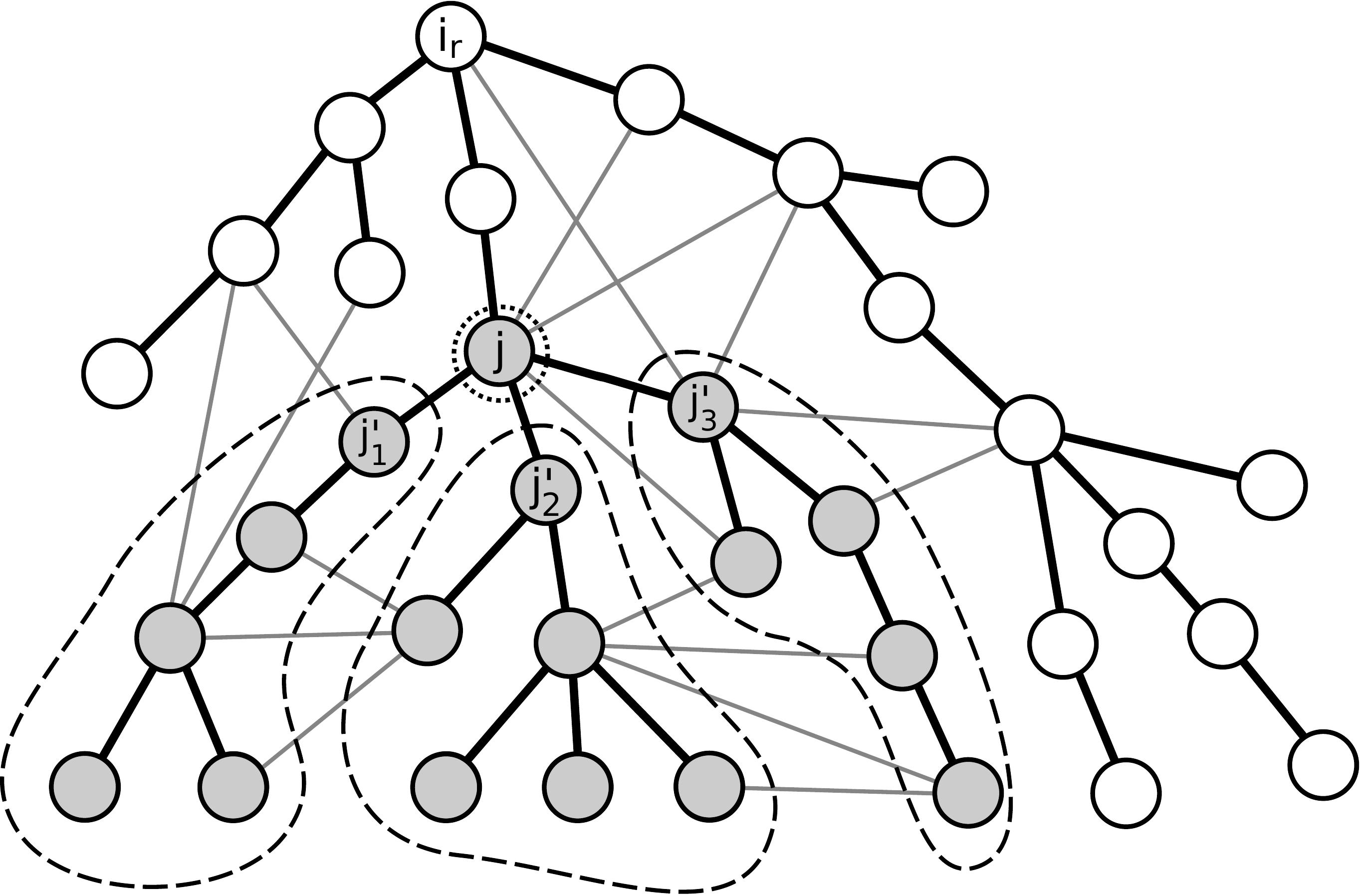}{0.24}
\end{center}
\vspace{-0.1in}
\caption{
\label{f:treepartition}
{\bf Left:} An illustration of how a circuit can be used in the selection and prediction phases. 
An optimal two-cluster partition is shown.
% We depict the node of one the two subsets of the optimal partition in black, and the ones 
% of the other partition in white. 
The negative edges are shown using dashed (either thick black or thin gray) lines. 
Two circuits are depicted using thick black lines: one containing edge $(i_1, j_1)$, 
the other containing edge $(i_2, j_2)$. For each circuit $C$ in the graph, 
we can choose any edge belonging to $C$ to be part of the test set, all remaining edge labels being queried.  
In this example, if we select $(i_1,j_1)$ as test set edge, then 
label $Y_{i_1,j_1}$ is predicted with
$(-1)^{5} = -1$, since $5$ is the number of negative edges in $E_C \setminus \{(i_1,j_1)\}$. 
Observe that the presence of a $\delta$-edge (the negative edge incident to $i_1$) 
causes a prediction mistake in this case. 
The edge $Y_{i_2,j_2}$ on the other circuit is predicted correctly, since this 
circuit contains no $\delta$-edges.
{\bf Right:} A graph $G = (V,E)$ and a spanning tree $T = (V_T,E_T)$, rooted at $i_r$, 
whose edges $E_T$ are indicated by thick lines. A subtree $T_j$, rooted at $j$, with grey nodes. 
According to the level order induced by root $i_r$, nodes $j_1'$, $j_2'$ and $j_3'$ are the 
children of node $j$. \tp\ computes $E_G(T_j,T)$, the set of edges connecting one grey node 
to one white node, as explained in the main text.
% starting from $E(T_{j_1'},T)$, $E(T_{j_2'},T)$ and $E(T_{j_3'},T)$
% in the following way. First, it creates the union set of $E(T_{j_1'},T)$, 
% $E(T_{j_2'},T)$, $E(T_{j_3'},T)$ and the set containing all edges incident to $j$. 
% Then, all edges that connect pair nodes belonging to $V_{T_j}$ (i.e., pairs of grey nodes) 
% are eliminated from the union set. 
% The edge set so obtained is $E_G(T_j,T)$. 
Since $|E_G(T_j,T)| \ge 9$, \tp\ invoked with $\theta=9$ returns tree $T_j$. 
}
\end{figure}
\tp$(T',G, \theta)$ returns a subtree $T'_j$ of $T'$ such that:
(i) for each node $v \not\equiv j$ of $T'_j$ we have $|E_G(T'_v,T')| \le \theta$, and 
(ii) $|E_G(T_j',T')| \ge \theta$.
In the special case when no such subtree $T'_j$ exists, the whole 
tree $T'$ is returned, i.e., we set $T'_j \equiv T'$. 
As we show in Lemma~\ref{l:tpart} in Appendix \ref{sa:additional}, 
in this special case (i) still holds. 
\tp\ can be described as follows ---see Figure \ref{f:treepartition} (right) for an example. We perform
a depth-first visit of the input tree starting from $i_r$. 
We associate some of the nodes $i \in V_{T'}$ with a
record $R_i$ containing all edges of $E_G(T'_i, T')$. Each time we visit
a leaf node $j$, we insert in $R_j$ all edges linking $j$ to all other nodes in $T'$ 
(except for $j$'s parent). On the other hand, if
$j$ is an \textit{internal} node, when we visit it for the \textsl{last} time,\footnote
{
Since $j$ is an internal node, its last visit is performed during a backtracking step
of the depth-first visit.
}
we set $R_j$ to the union of $R_{j'}$ over all $j$'s children $j'$ ($j'_1$, $j'_2$, and $j'_3$
in Figure \ref{f:treepartition} (right)), 
excluding the edges connecting the subtrees $T_{j'}$ to each other. For instance, in Figure 
\ref{f:treepartition} (right), we include all gray edges departing from subtree $T_j$,
but exclude all those joining the three dashed areas to each other.
In both cases, once $R_j$ is created,
if $|R_j| \ge \theta$ or $j \equiv i_r$, \tp\ stops and
returns $T'_j$. Observe that the order of the depth-first visit ensures that, 
for any internal node $j$, when we are about to compute
$R_j$, all records $R_{j'}$ associated with $j$'s children $j'$ are already available.

We now move on to describe \ep.
Let $i_r \neq q \in V_{T'}$. \ep$(T'_q,T',G,\rho)$ returns a partition 
$\scE=\{E_1, E_2, \ldots\}$ of $E_G(T'_q,T')$ into edge subsets of cardinality $\rho+1$, 
that we call \textit{sheaves}. If $|E_G(T'_q,T')|$ is not a multiple of $\rho+1$, 
the last sheaf can be as large as $2\rho$.
%
% If $|E_G(T'_q,T')| \mod (r+1)= h \neq 0$, the last sheaf 
% $E'_{\lfloor |E_G(T'_q,T')|/(r+1) \rfloor}$ contains $r+1+h$ edges (instead of $r+1$).
%
After this subroutine is invoked, for each sheaf $E_k \in \scE$, \alg\ queries the label of 
an \textit{arbitrary} edge $(i,j) \in E_k$, where
$i \in V_{T'_q}$ and $j \in V_{T'} \setminus V_{T'_q}$. 
Each further edge $(i',j') \in E_k \setminus \{(i,j)\}$, 
having $i' \in V_{T'_q}$ and $j' \in V_{T'}$, will be part of the test set, 
and its label will be predicted using the path\footnote
{
Observe that $\path_{T'_q}(j,j') \equiv  \path_{T'}(j,j') \equiv  \path_{T}(j,j')$, 
since $T'_q \subseteq T' \subseteq T$.
}
$\path_T(i',i) \rightarrow (i,j) \rightarrow \path_T(j,j')$
which, together with test edge $(i',j')$, forms a circuit. Note
that all edges along this path are queried edges and, moreover, $L_{i,j} \leq 2\rho)$ 
because $(i,j)$ cannot belong to more than $2\rho$ circuits of $\scC(G)$. 
Label $Y_{i',j'}$ is therefore predicted by $\pi_T(i',i)\cdot Y_{i,j}\cdot\pi_T(j,j')$.
 
From the above, we see that $E_G(T'_q,T')$ is partioned into test set and query set
with a ratio at least $\rho$.
The partition of $E(T'_q,T')$ into sheaves is carefully performed by \ep\ so as to ensure 
that the load increse of the edges in $E_{T'} \setminus E_{T'_q}$ is only $\scO(\rho)$,
\textit{independent} of the size of $E(T'_q,T')$. 
Moreover, as we show below, the circuits of $\scC(G)$ that we create by invoking 
\ep$(T'_q,T',G,\rho)$ increase the load of each edge of $(u,v)$ (where $u$ is parent of $v$
in $T'_q$) by at most $2\rho\,|E_G(T'_u,T')|$. 
This immediately implies ---see Lemma \ref{l:tpart}(i) in Appendix \ref{sa:additional}--- 
that if $T'_q$ was previously obtained 
by calling \tp$(T', G, \theta)$, then the load of each edge of $T'_q$ gets increased
by at most $2\rho\theta$.
\begin{figure}[t!]
\hrule\vspace{.03in}
\begin{tabbing}
\hspace{.25in} \=\hspace{.10in} \= \hspace{.10in} \=  \hspace{.10in} \=  \hspace{.10in} \=  \hspace{.10in}\= \hspace{.25in} \=\hspace{.10in} \= \hspace{.10in} \=  \hspace{.10in} \=  \kill
\alg$(\rho,\theta)$ \qquad  Parameters: $\rho>0, \theta\ge1$.\\

{\tt 1.}  Draw an arbitrary spanning tree $T$ of $G$, and query all its edge labels\\
{\tt 2.}  \textbf{Do} \\
{\tt 3.} \qquad $T_q \leftarrow \tp(T,G,\theta)$\\
{\tt 4.}  \qquad \textbf{For each} $i,j \in V_{T_q}$, set \ $\hat{Y}_{i,j} \leftarrow \pi_T(i,j)$\\
{\tt 5.}  \qquad $\scE \leftarrow \ep(T_q, T, G, \rho)$\\
{\tt 6.} \qquad \textbf{For each} $E_k \in \scE$\\
{\tt 7.} \qquad \qquad query the label of an arbitrary edge $(i,j) \in E_k$\\
{\tt 8.} \qquad \qquad \textbf{For each} edge $(i',j') \in E_k \setminus \{(i,j)\}$,   
         where $i, i' \in V_{T_q}$ and $j,j' \in V_{T} \setminus V_{T_q}$\\
{\tt 9.}  \qquad \qquad \qquad    $\hat{Y}_{i',j'} \leftarrow \pi_T(i',i)\cdot Y_{i,j}\cdot \pi_T(j,j')$\\
{\tt 10.} \qquad $T \leftarrow T \setminus T_q$\\
{\tt 11.} \textbf{While} ($V_{T} \not\equiv \emptyset$)\\
\end{tabbing}
\vspace{-.3in}\hrule 
\caption{\label{f:algsimple}The Simplified Constrained Circuit Covering Classifier \alg.} 
\end{figure}
We now describe how \ep\ builds the partition of $E_G(T'_q,T')$ into sheaves. 
\ep$(T'_q,T',G,\rho)$ first performs a depth-first visit of 
%FV
$T' \setminus T'_q$ starting from root $i_r$. 
Then the edges of $E_G(T'_q,T')$ are numbered consecutively by the order of this visit, 
where the relative ordering of the edges incident to the same node encountered during this 
visit can be set arbitrarily.
Figure \ref{f:edgepartition} in Appendix \ref{sa:additional} helps visualizing the process of sheaf construction.
One edge per sheaf is queried, the remaining ones are assigned to the test set. 

\alg's pseudocode is given in Figure~\ref{f:algsimple}, $\hat{Y}_{i,j}$ therein
denoting the predicted labels of the test set edges $(i,j)$.
The algorithm takes in input the ratio parameter $\rho$ (ruling the test set-to-query set ratio), 
and the threshold parameter $\theta$. 
%
%As explained above, $\rho$ ensures $|TrSet| - (|V_G|-1) \le |TeSet|/r$.
%
After drawing an initial spanning tree $T$ of $G$, and querying all its edge labels, 
\alg\ proceeds in steps as follows.
At each step, the algorithm calls \tp\ on (the current) $T$. 
%so as to find a subtree $T_q$ satisfying the properties of Lemma\ \ref{l:tpart}. 
Then the labels of all edges linking pairs of nodes 
$i,j \in V_{T_q}$ are selected to be part of the test set,
and are simply predicted by $\hat{Y}_{i,j} \leftarrow \pi_T(i,j)$. Then,
all edges linking the nodes in $V_{T_q}$ to the nodes in $V_T \setminus V_{T_q}$
are split into sheaves via \ep. For each sheaf, an arbitrary edge $(i,j)$
is selected to be part of the query set. All remaining edges $(i',j')$ become
part of the test set, and their labels are predicted by 
$\hat{Y}_{i',j'} \leftarrow \pi_T(i',i)\cdot Y_{i,j}\cdot \pi_T(j,j')$, 
where $i,i' \in V_{T_q}$
and $j, j' \in V_T \setminus V_{T_q}$. Finally, we shrink $T$ as $T \setminus T_q$, and
iterate until $V_T \equiv \emptyset$. 
Observe that in the last
do-while loop execution we have $T \equiv \tp(T,G,\theta)$. 
Moreover, if $T_q \equiv T$, Lines 6--9 are not executed since 
$E_G(T_q,T) \equiv \emptyset$, which implies that $\scE$ is an empty set partition.
\begin{figure}[!t]
\hrule\vspace{.03in}
\begin{tabbing}
\hspace{.25in} \=\hspace{.10in} \= \hspace{.10in} \=  \hspace{.10in} \=  \hspace{.10in} \=  \hspace{.10in}\= \hspace{.25in} \=\hspace{.10in} \= \hspace{.10in} \=  \hspace{.10in} \=  \kill
\algb$(\rho)$ \qquad  Parameter: $\rho$ satisfying $3 < \rho \leq \frac{|E_G|}{|V_G|}$.\\
{\tt 1.} Initialize $E \leftarrow E_G$\\
{\tt 2.}  \textbf{Do} \\
{\tt 3.}  \qquad Select an arbitrary edge subset $E' \subseteq E$ such that $|E'| = \min\{|E|,\rho|V_G|\}$\\
{\tt 4.}  \qquad Let $G' = (V_G,E')$\\
{\tt 5.}  \qquad \textbf{For each} connected component $G''$ of $G'$, \ run  $\alg(\rho,\sqrt{|E'|})$ on $G''$\\
{\tt 6.}  \qquad $E \leftarrow E \setminus E'$\\
{\tt 7.}  \textbf{While} ($E \not\equiv \emptyset$)
\end{tabbing}
\vspace{-.13in}\hrule 
\caption{\label{f:algfinal}
The Constrained Circuit Covering Classifier \algb.} 
\end{figure}

We are now in a position to describe a more refined algorithm, called \algb\ (Constrained Circuit Covering
Classifier --- see Figure \ref{f:algfinal}),
that uses \alg\ on suitably chosen subgraphs of the original graph. The advantage of \algb\ over
\alg\ is that we are afforded to reduce the mistake bound from 
$\scO\bigl(\Delta_2(Y)\,\rho\,\sqrt{|E_G|}\bigr)$ (Lemma \ref{t:alg} in Appendix \ref{sa:additional})
to $\scO\bigl(\Delta_2(Y)\,\rho^{\frac{3}{2}}\, \sqrt{|V_G|}\bigr)$.
\algb\  proceeds in (at most) $|E_G|/(\rho|V_G|)$ steps as follows. At each step
the algorithm splits into query set and test set an edge subset $E' \subseteq E$,
where $E$ is initially $E_G$. The size of $E'$ is guaranteed to be at most $\rho|V_G|$. 
The edge subset $E'$ is made up of arbitrarily chosen edges 
that have not been split yet into query and test set. 
The algorithm considers subgraph $G' = (V_G,E')$, and invokes
\alg\ on it for querying and predicting its edges.
Since $G'$ can be disconnected, \algb\ simply invokes $\alg(\rho,\sqrt{|E'|})$ 
on each connected component of $G'$.
The labels of the test edges in $E'$
are then predicted, and $E$ is shrunk to $E \setminus E'$.
The algorithm terminates when $E \equiv \emptyset$.
\begin{theorem}\label{t:algb}
The number of mistakes made by $\algb(\rho)$, with $\rho$ satisfying $3 < \rho \leq \frac{|E_G|}{|V_G|}$,
on a graph $G = (V_G,E_G)$ with unknown labeling $Y$ 
is $\scO\bigl(\Delta_2(Y) \rho^{\frac{3}{2}} \sqrt{|V_G|}\bigr)$.
Moreover, we have
$\frac{|\scC(G)|}{Q} \ge \frac{\rho-3}{3}$, where $Q$ is the size
of the query set
%(excluding the initial $|V_G|-1$ labels), 
and $|\scC(G)|$ is the size of the test set.
\end{theorem}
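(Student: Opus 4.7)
The plan is to analyze \algb\ by summing the guarantees of \alg\ over all its invocations, exploiting a subadditivity property of the two-correlation clustering index under edge partitions. At each outer iteration, \algb\ selects an edge subset $E'\subseteq E_G$ with $|E'|\le\rho|V_G|$, forms the spanning subgraph $G'=(V_G,E')$, and invokes $\alg(\rho,\sqrt{|E'|})$ on every connected component $G''=(V_{G''},E_{G''})$ of $G'$ (the restriction to connected inputs is needed because \alg\ begins by drawing a spanning tree of its input). Writing $\Delta_2^{G''}(Y)$ for the two-correlation clustering index of $G''$ with labels inherited from $Y$, Lemma~\ref{t:alg} bounds the mistakes on such a component by $\scO\bigl(\Delta_2^{G''}(Y)\,\rho\,\sqrt{|E_{G''}|}\bigr)$; since $|E_{G''}|\le|E'|\le\rho|V_G|$ this is $\scO\bigl(\Delta_2^{G''}(Y)\,\rho^{3/2}\,\sqrt{|V_G|}\bigr)$.

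To aggregate the per-component bounds I would fix an optimal two-clustering of $(G,Y)$ and let $E_\Delta$ denote its $\delta$-edge set, so $|E_\Delta|=\Delta_2(Y)$. Restricting this two-clustering to the vertices of any subgraph $G''$ yields a (possibly suboptimal) two-clustering of $G''$ of cost exactly $|E_\Delta\cap E_{G''}|$; hence $\Delta_2^{G''}(Y)\le|E_\Delta\cap E_{G''}|$. The connected components of a given $G'$ have pairwise disjoint edge sets, and the subsets $E'$ chosen across outer iterations form a partition of $E_G$; summing therefore yields $\sum\Delta_2^{G''}(Y)\le|E_\Delta|=\Delta_2(Y)$, and combined with the per-component bound this gives the total mistake count $\scO\bigl(\Delta_2(Y)\,\rho^{3/2}\,\sqrt{|V_G|}\bigr)$.

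For the size-ratio claim I would count queries and test edges in each \alg\ call: the query set consists of the $|V_{G''}|-1$ initial spanning-tree edges plus one edge per sheaf, and since sheaves contain at least $\rho+1$ edges the total number of sheaves is at most $(|E_{G''}|-|V_{G''}|+1)/(\rho+1)$. Summing over components inside one outer iteration, the per-iteration query-set size is at most $|V_G|+|E'|/(\rho+1)$ and the test-set size at least $|E'|\rho/(\rho+1)-|V_G|$. Using $|E'|=\rho|V_G|$ (which holds for all but possibly the final outer iteration, thanks to $\rho\le|E_G|/|V_G|$), a short algebraic manipulation gives a per-iteration test-to-query ratio of at least $(\rho^2-\rho-1)/(2\rho+1)\ge(\rho-3)/3$ for $\rho>3$, and this bound is preserved under summation. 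The main obstacle is the mistake aggregation: one must argue carefully that the \emph{same} globally optimal two-clustering acts as a feasible witness for every $G''$, which is what makes the $\Delta_2^{G''}$ values subadditive under the edge partition induced by \algb; the possibly smaller final $E'$ and mild degeneracies of the restricted clustering contribute only lower-order terms.
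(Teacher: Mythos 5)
Your mistake-bound argument is essentially the paper's: the paper likewise charges each prediction error to a $\delta$-edge of a single globally optimal two-clustering, uses the fact that the edge sets of the components $G''$ (across components and across outer iterations) are pairwise disjoint, and bounds each queried edge's load by $\scO(\rho\sqrt{|E'|}) = \scO(\rho^{3/2}\sqrt{|V_G|})$; your ``subadditivity of $\Delta_2$ via a global witness clustering'' is the same charging scheme in different words. One small imprecision: \algb\ invokes $\alg(\rho,\sqrt{|E'|})$, not $\alg(\rho,\sqrt{|E_{G''}|-|V_{G''}|+1})$, so Lemma~\ref{t:alg} does not literally apply per component; the correct tool is the load bound of Lemma~\ref{l:alg} with $\theta=\sqrt{|E'|}$, giving $\scO\bigl(\rho\bigl(\tfrac{|E_{G''}|-|V_{G''}|+1}{\sqrt{|E'|}}+\sqrt{|E'|}\bigr)\bigr)=\scO(\rho\sqrt{|E'|})$, which is what the paper uses. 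Your intermediate claim $\scO(\Delta_2^{G''}\rho\sqrt{|E_{G''}|})$ can overstate what the invoked parameters give when $|E_{G''}|\ll|E'|$, but your final bound is unaffected since both quantities are at most $\sqrt{\rho|V_G|}$.

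The real loose end is in the ratio claim. The theorem asserts the exact inequality $\frac{|\scC(G)|}{Q}\ge\frac{\rho-3}{3}$, and your per-iteration accounting only establishes it for the ``full'' iterations with $|E'|=\rho|V_G|$; the final iteration can have $|E'|$ as small as a few edges, in which case the spanning forest may absorb all of $E'$ and that iteration contributes queries but zero test edges, so its own ratio is $0$ and ``preserved under summation'' plus ``lower-order terms'' is not a valid step for a non-asymptotic bound. The paper avoids this by a global, amortized count: at most $\frac{|E_G|}{\rho|V_G|}+1$ iterations, each contributing at most $|V_G|-1$ forest queries, plus one query per sheaf of size at least $\rho+1$ on the remaining edges, which yields $Q\le 3|E_G|/\rho$ using $\rho\le|E_G|/|V_G|$, and hence the stated ratio. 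Your route can be repaired (the slack in the full-iteration ratio $\frac{\rho^2-\rho-1}{2\rho+1}$ over $\frac{\rho-3}{3}$ is enough to absorb one degenerate final batch of at most $|V_G|$ extra queries, provided at least one full iteration exists, which $\rho\le|E_G|/|V_G|$ guarantees), but that absorption argument must be made explicit rather than dismissed as lower order.
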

\newcommand{\scD}{\mathcal{D}}
\begin{remark}
Since we are facing a worst-case (but oblivious) adversary,
one may wonder whether randomization might be beneficial in \alg\ or \algb.
We answer in the affermative as follows.
The \textit{randomized} version of $\alg$ is $\alg$ where the following two steps are randomized:
(i) The initial spanning tree $T$ (Line 1 in Figure \ref{f:algsimple}) is drawn at {\em random} 
according to a given distribution $\scD$ over the spanning trees of $G$. 
(ii) The queried edge selected from each sheaf $E_k$ returned by calling \ep\ 
(Line $7$ in Figure \ref{f:algsimple}) 
is chosen uniformly at random among all edges in $E_k$.
Because the adversarial labeling is oblivious to the query set selection, 
the mistake bound of this randomized $\alg$ can be shown 
to be the sum of the \textit{expected} loads of each $\delta$-edge, 
which can be bounded by \
\(
\scO\Bigl(\Delta_2(Y) \max\{1,\rho\,P_{\scD}^{\max}\,\sqrt{|E_G|-|V_G|+1} \}\Bigr)
\),\ 
where $P_{\scD}^{\max}$ is the maximal over all probabilities of including edges $(i,j) \in E$ in $T$. 
% In fact, it is easy to check that the mistake bound immediately follows from the fact that  $\E L_{i,j}$
% is bounded by $1$ for any training edge that do not belong to $T$ and, by  $P_{\scD}^{\max}$ 
% for any edge $(i,j) \in E_T$.
%
%As it is straightforward to verify, 
When $T$ is a uniformly generated random spanning tree \cite{LP09}, and $\rho$ is a constant
(i.e., the test set is a constant fraction of the query set)
this implies optimality up to a factor $k\rho$ (compare to Theorem \ref{th:lower-active}) on
any graph where the effective resistance \cite{LP09} between any pair of adjacent nodes 
in $G$ is $\scO\bigl(k/|V|\bigr)$ ---for instance, a very dense clique-like graph.   
One could also extend this result to $\algb$, but
this makes it harder to select the parameters of \alg\ within \algb.
% Yet, in this case
% we obtain a hard-to-manage tradeoff between the need to have large value
% of $\rho$ (which implies that $P_{\scD}^{\max}$ is likely to be small on any subgraph $G''$
% on which $\alg$ is invoked) 
% -- this is also needed for having a small query set, and the need to have a small value
% of $\rho$ (which is needed for reducing the $\rho$'s contribution to the final mistake bound), 
% according to Theorem\ \ref{t:algb}.
\end{remark}
We conclude with some remarks on the time/space requirements for the two algorithms 
\alg\ and \algb, details 
%of the complexity analysis 
will be given in the full version of this paper. 
%Due to space limitation, we cannot show the proofs of these complexity results. 
The amortized time per prediction required by $\alg(\rho,\sqrt{|E_G|-|V_G|+1})$ 
and $\algb(\rho)$ is
$\scO\Bigl(\frac{|V_G|}{\sqrt{|E_G|}}\log |V_G|\Bigr)$ and 
$\scO\Bigl(\sqrt{\frac{|V_G|}{\rho}}\log |V_G|\Bigr)$, respectively, 
provided $|\scC(G)| = \Omega(|E_G|)$ and $\rho \le \sqrt{|V_G|}$. For instance, 
when the input graph $G = (V_G,E_G)$ 
has a quadratic number of edges, $\alg$ has  
an amortized time per prediction which is only {\em logarithmic} in $|V_G|$. 
In all cases, both algorithms need linear space in the size of the input graph.
In addition, each do-while loop execution within \algb\ can be run in parallel.

\section{Conclusions and ongoing research}
In this paper we initiated a rigorous study of link classification in signed graphs. Motivated by social balance theory, we adopted the correlation clustering index as a natural regularity measure for the problem. We proved upper and lower bounds on the number of prediction mistakes in three fundamental transductive learning models: online, batch and active. Our main algorithmic contribution is for the active model, where we introduced a new family of algorithms based on the notion of circuit covering. Our algorithms are efficient, relatively easy to implement, and have mistake bounds that hold on any signed graph. We are currently working on extensions of our techniques based on recursive decompositions of the input graph. Experiments on social network datasets are also in progress.

\appendix

\section{Proofs}

\begin{proof}[Lemma \ref{fact1}]
%\vspace{-0.1in}
% As proved by T.P.~Kirkman [[Bela Bollobas, Combinatorics. Cambridge University Press, 1986, 177 pp.]] p. 113)
The edge set of a clique $G = (V,E)$ can be decomposed into edge-disjoint triangles if and only if there exists 
an integer $k \ge 0$ such that $|V|=6k+1$ or $|V|=6k+3$ ---see, e.g., page~113 of~\cite{Bol86}. 
This implies that for any clique $G$ we can find a subgraph $G' = (V',E')$ such that $G'$ is a 
clique, $|V'| \ge |V|-3$, and $E'$ can be decomposed into edge-disjoint triangles. 
As a consequence, we can find $K$ edge-disjoint triangles among the 
$\tfrac{1}{3}|E'| = \tfrac{1}{6}|V'|(|V'|-1) \ge \tfrac{1}{6}(|V|-3)(|V|-4)$ edge-disjoint triangles of $G'$, 
and label one edge (chosen arbitrarily) of each triangle with $-1$, all the remaining $|E|-K$ edges of $G$
being labeled $+1$. Since the elimination of the $K$ edges labeled $-1$ implies the elimination of all bad cycles, 
we have $\Delta(Y) \leq K$. Finally, since $\Delta(Y)$ is also lower bounded by the number of edge-disjoint 
bad cycles, we also have $\Delta(Y) \geq K$.
%, thus concluding the proof.
\end{proof}

\begin{proof}[Theorem \ref{t:lower}]
The adversary first queries the edges of a spanning tree of $G$ forcing a mistake at each step.
Then there exists a labeling of the remaining edges such that the overall labeling $Y$ satisfies
$\Delta(Y) = 0$.
This is done as follows. We partition the set of nodes $V$ into clusters such that each pair of
nodes in the same cluster is
connected by a path of positive edges on the tree. Then we label $+1$ all non-tree edges that are
incident to nodes in the same cluster, and label $-1$ all non-tree edges that are incident
to nodes in different clusters. Note that in both cases no bad cycles are created, thus
$\Delta(Y) = 0$. After this
first phase, the adversary can force additional $K$ mistakes by querying $K$ arbitrary non-tree
edges and forcing a mistake at each step. Let $Y'$ be the final labeling.
Since we started from $Y$ such that $\Delta(Y)=0$ and at most $K$ edges have been flipped,
it must be the case that $\Delta(Y') \le K$. 
\end{proof}

\begin{proof}[Theorem \ref{t:clique}]
If $K \ge \frac{|V|}{8}$ we have $|V|+K \ge |V| + K \Bigl( \log_2 \frac{|V|}{K} -2 \Bigr)$, so one can prove
the statement just by resorting to the adversarial strategy in the proof of Theorem \ref{t:lower}. 
Hence, we continue by assuming $K < \frac{|V|}{8}$. 
We first show that on a special kind of graph $G' = (V',E')$, whose labels $Y'$ are  
partially revealed, any algorithm can be forced to make at least $\log_2 |V'|$ mistakes with $\Delta(Y') = 1$. 
Then we show (Phase 1) how to force $|V|-1$ mistakes on $G$ while maintaining $\Delta(Y) = 0$, and 
(Phase 2) how to extract from the input graph $G$, consistently with the labels revealed in Phase~1, 
$K$ edge-disjoint copies of $G'$. The creation of each of these subgraphs, 
which contain $|V'| = 2^{\lfloor \log_2 (|V|/(2K))\rfloor}$ nodes, contributes  
$\Bigl\lfloor \log_2 \frac{|V'|}{2K}\Bigr\rfloor \ge  \log_2 \frac{|V|}{K} - 2$ 
additional mistakes. In Phase~2 the value of $\Delta(Y)$ is increased by 
one for each copy of $G'$ extracted from $G$. 

Let $d(i,j)$ be the distance between node $i$ and node $j$ in the graph under consideration, 
i.e., the number of edges in the shortest path connecting $i$ to $j$. 
The graph $G'= (V',E')$ is constructed as follows. The number of nodes $|V'|$ is a power of $2$.
$G'$ contains a cycle graph $C$ having $|V'|$ edges, together with $|V'|/2$ additional edges. 
Each of these additional edges connects the $|V'|/2$ pairs of nodes 
$\{i_0,j_0\}, \{i_1,j_1\}, \dots$ of $V'$ such that, for all indices $k \ge 0$, 
the distance $d(i_k,j_k)$ calculated on $C$ is equal to $|V'|/2$. We say that 
$i_k$ and $j_k$ are \textit{opposite} to one another. 
One edge of $C$, say $(i_0, i_1)$, is labeled $-1$, all the remaining $|V'|-1$ 
in $C$ are labeled $+1$. All other edges of $G'$, which connect opposite nodes, 
are unlabeled. We number the nodes of $V'$ as 
$i_0, i_1, \dots, i_{|V'|/2-1}, j_0, j_1, \ldots, j_{|V'|/2-1}$ in such a way that, on the cycle graph 
$C$, $i_k$ and $j_k$ are adjacent to $i_{k-1}$ and $j_{k-1}$, respectively, for	all indices $k \ge 0$. 
With the labels assigned so far we clearly have $\Delta(Y') = 1$. 

We now show how the adversary can force $\log_2 |V'|$ mistakes upon revealing the unassigned labels, 
without increasing the value of $\Delta(Y')$. The basic idea is to have a version space $S'$ of $G'$,
and halve it at each mistake of the algorithm.
Since each edge of $C$ can be the (unique) $\delta$-edge,\footnote{
Here a $\delta$-edge is a labeled edge contributing to $\Delta(Y)$.
}
we initially have $|S'|=|V'|$.
The adversary forces the first mistake on edge $(i_0,j_0)$, just by assigning a label which is 
different from the one predicted by the algorithm. 
If the assigned label is $+1$ then the $\delta$-edge is constrained to be along 
the path of $C$ connecting $i_0$ to $j_0$ via $i_1$, otherwise it must be along 
the other path of $C$ connecting $i_0$ to $j_0$. 
Let now $L$ be the line graph including all edges that can 
be the $\delta$-edge at this stage, and $u$ be the node in the "middle" of 
$L$, (i.e., $u$ is equidistant from the two terminal nodes). 
The adversary forces a second mistake by asking for the label of the edge connecting $u$ to its opposite
node. If the assigned label is $+1$ then the $\delta$-edge is constrained to be on the half of $L$ which 
is closest to $i_0$, otherwise it must be on the other half. Proceeding this way, the adversary 
forces $\log_2 |V'|$ mistakes without increasing the value of $\Delta(Y')$. 
Finally, the adversary can assign all the remaining $|V'|/2-\log_2|V'|$ labels in such a 
way that the value of $\Delta(Y')$ does not increase. Indeed, after the last forced mistake 
we have $|S'|=1$, and the $\delta$-edge is completely determined. All nodes of the labeled graph obtained 
by flipping the label of the $\delta$-edge can be partitioned into clusters such that each pair of 
nodes in the same cluster is connected by a path of $+1$-labeled edges. Hence the adversary can 
label all edges in the same cluster with $+1$ and all edges 
connecting nodes in different clusters with $-1$. Clearly, dropping the $\delta$-edge resulting 
from the dichotomic procedure also removes all bad cycles from $G'$.

{\noindent \bf Phase 1. }
Let now $H$ be any Hamiltonian path in $G$. In this phase,
the labels of the edges in $H$ are presented to the learner, and one mistake per edge is forced, 
i.e., a total of $|V|-1$ mistakes. According to the assigned labels, the nodes in 
$V$ can be partitioned into two clusters such that any pair of nodes in each cluster are connected by a path 
in $H$ containing an even number of $-1$-labeled edges.\footnote{
In the special case when there is only one cluster, we can think of the second cluster as the empty set.
}  
Let now $V_0$ be the larger cluster and $v_1$ be one 
of the two terminal nodes of $H$. We number the nodes of $V_0$ as $v_1, v_2, \ldots,$ in such a way that 
$v_k$ is the $k$-th node closest to $v_1$ on $H$. 
Clearly, all edges  $(v_k, v_{k+1})$ either have been labeled $+1$ in this phase or are unlabeled. 
For all indices $k \ge 0$, the adversary assigns each unlabeled edge $(v_k, v_{k+1})$ of 
$G$ a $+1$ label. Note that, at this stage, no bad cycles are created, since the edges just labeled 
are connected through a path containing two $-1$ edges. 

{\noindent \bf Phase 2. }
Let $H_0$ be the line graph containing all nodes of $V_0$ and all edges incident to these nodes that have been 
labeled so far. Observe that all edges in $H_0$ are $+1$. Since $|V_0| \ge |V|/2$, $H_0$ must contain a 
set of $K$ edge-disjoint line graphs having $2^{\lfloor \log_2 (|V|/(2\Delta)) \rfloor} $ edges.  
The adversary then assigns label $-1$ to all edges of $G$ connecting the two terminal nodes of 
these sub-line graphs. Consider now all the cycle graphs formed by all node sets of the $K$ sub-line graphs 
created in the last step, together with the $-1$ edges linking the two terminal nodes of each sub-line graph. 
Each of these cycle graphs has a number of nodes which is a power of $2$. Moreover, only one edge is $-1$, 
all remaining ones being $+1$. Since no edge connecting the nodes of the cycles has been assigned yet, 
the adversary can use the same dichotomic technique as above to force, for each cycle graph, 
$\Bigl\lfloor \log_2 \frac{|V|}{2K} \Bigr\rfloor$ additional mistakes without increasing the value of $\Delta(Y)$.
\end{proof}

\begin{proof}[Theorem \ref{t:wm}]
We first claim that the following bound on the version space size holds:
\[
    \log_2|S^*| < d^*\log_2\frac{e|E|}{d^*} + |V|\log_2\frac{|V|}{\ln(|V|+1)}~.
\]
To prove this claim, observe that each element of $S^*$ is uniquely identified by 
a partition of $V$ and a choice of $d^*$ edges in $E$. Let $B_n$ (the Bell number) 
be the number of partitions of a set of $n$ elements. Then
\(
    |S^*| \le B_{|V|} \times \binom{|E|}{d^*}~.
\)
Using the upper bound
$
    B_n < \left(\frac{n}{\ln(n+1)}\right)^n
$
and standard binomial inequalities yields the claimed bound on the version space size. 

Given the above, the mistake bound of the resulting algorithm is an easy consequence 
of the known mistake bounds for Weighted Majority.
\end{proof}

\begin{proof}[Theorem \ref{t:hardness}, sketch]
We start by showing that the Halving algorithm is able to solve UCC by building a reduction from UCC to link
 classification. Given an instance of $(G,Y)$ of UCC, let the supergraph $G' = (V',E')$ be defined as follows: 
introduce $G'' = (V'',E'')$, a copy of $G$, and let $V' = V \cup V''$. Then connect each node $i'' \in V''$ to the 
corresponding node $i \in V$, add the resulting edge $(i,i'')$ to $E'$, and label it with $+1$. Then add to $E'$ all 
edges in $E$, retaining their labels.
Since the optimal clustering of $(G,Y)$ is unique, there exists only one assignment $Y''$ to the labels of $E'' 
\subset E'$ such that $\Delta(Y') = \Delta(Y)$. This is the labeling consistent with the optimal clustering $\scC^*$ 
of $(G,Y)$: each edge $(i'',j'') \in E''$ is labeled $+1$ if the corresponding edge $(i,j) \in E$ connects 
two nodes contained in the same cluster of $\scC^*$, and $-1$ otherwise.
Clearly, if we can classify correctly the edges of $E''$, then the optimal clustering $\scC^*$ is recovered. 
In order  to do so, we run $\textsc{hal}_d$ on $G''$ with increasing values of $d$ starting from $d=0$.
For each value of $d$, we feed {\em all} edges of $G''$ to $\textsc{hal}_d$ and check whether the 
number $z$ of edges $(i'',j'') \in E''$ for which the predicted label is different from the one of the corresponding
edge $(i,j) \in E$, is equal to $d$. 
If it does not, we increase $d$ by one and repeat. The smallest value $d^*$ of $d$ for which $z$ 
is equal to $d$ must be the true value of 
$\Delta(Y')$. Indeed, for all $d < d^*$ Halving cannot find a labeling of $G'$ with cost $d$. Then, we run 
$\textsc{hal}_{d^*}$ on $G'$ and feed each edge of $E''$. After each prediction we reset the algorithm. 
Since the assignment $Y''$ is unique, there is only one labeling (the correct one) in the version space. Hence the 
predictions of $\textsc{hal}_{d^*}$ are all correct, revealing the optimal clustering $\scC^*$.
The proof is concluded by constructing the series of reductions 
\[
\text{Unique Maximum Clique} \rightarrow \text{Vertex Cover} \rightarrow
\text{Multicut} \rightarrow \text{Correlation Clustering}, 
\]
where the initial problem in this chain is known not to 
be solvable in polynomial time, unless $\mathrm{RP} = \mathrm{NP}$.
\end{proof}

\begin{proof}[Theorem \ref{th:rade}]
First, by a straightforward combination of~\cite[Remark~2]{EP09} and the union bound,
we have the following uniform convergence result for the class $\scP$:
With probability at least $1-\delta$, uniformly over $f\in\scP$, it holds that
\begin{equation}\label{e:rade}
\frac{1}{u}\Delta_u(f) 
\le 
\frac{1}{m}\Delta_m(f) 
+ c\sqrt{\left(\frac{1}{m} + \frac{1}{u}\right)\left(|V|\ln|V|+\ln\frac{1}{\delta}\right)}
\end{equation}
where $c$ is a suitable constant.
Then, we let $f^*\in\scP$ be the partition that achieves $\Delta_{m+u}(f^*) = \Delta$.
By applying~\cite[Remark~3]{EP09} we obtain that
\[
	\frac{1}{m}\Delta_m(\hf) \le \frac{\kappa}{m}\Delta_m(f^*) \le \frac{\kappa}{m+u}\Delta_{m+u}(f^*) + c'\kappa\sqrt{\frac{u/m}{m+u}\ln\frac{2}{\delta}}
\]
with probability at least $1 - \tfrac{\delta}{2}$.
An application of (\ref{e:rade}) concludes the proof.
\end{proof}

\begin{proof}[Theorem \ref{th:lower-active}]
Let $Y$ be the following randomized labeling: All edges are labeled $+1$, except 
for a pool of $K$ edges, selected uniformly at random, and whose labels are set randomly. 
Since the size of the 
training set chosen by $A$ is not larger than $\alpha|E|$, the test set will contain in 
expectation at least 
$(1-\alpha)K$ 
randomly labeled edges. Algorithm $A$ makes in expectation $1/2$ mistakes on every such edge. 
%Finally, observe that 
Now, if we delete the edges with random labels we obtain a graph with all positive labels,
which immediately implies $\Delta_2(Y) \le K$.
\end{proof}

\begin{proof}[Theorem \ref{th:randomadv}]
We start from (\ref{e:detbound}) and take expectations. We have
\begin{align*}
    \E\,M_T
&\le
    p|E| + \sum_{e' \in E\setminus E_T}\sum_{e \in E} \Ind{e \in \path_T(e')} \Pr\bigl(e \in \Eflip\bigr)\\
&=
    p|E| + p \sum_{e' \in E\setminus E_T} \bigl|\path_T(e')\bigr|
\\ &=
    p|E| + p|E|\times \mathcal{O}\bigl(\log^2|V|\log\log|V|\bigr)\\
&=
    p|E| \times \mathcal{O}\bigl(\log^2|V|\log\log|V|\bigr)
\end{align*}
as claimed.
\end{proof}

\section{Additional figures, lemmas and proofs from Section~\ref{s:active}}\label{sa:additional}

\begin{figure}[t!]
\begin{center}
\includegraphics[width=75mm]{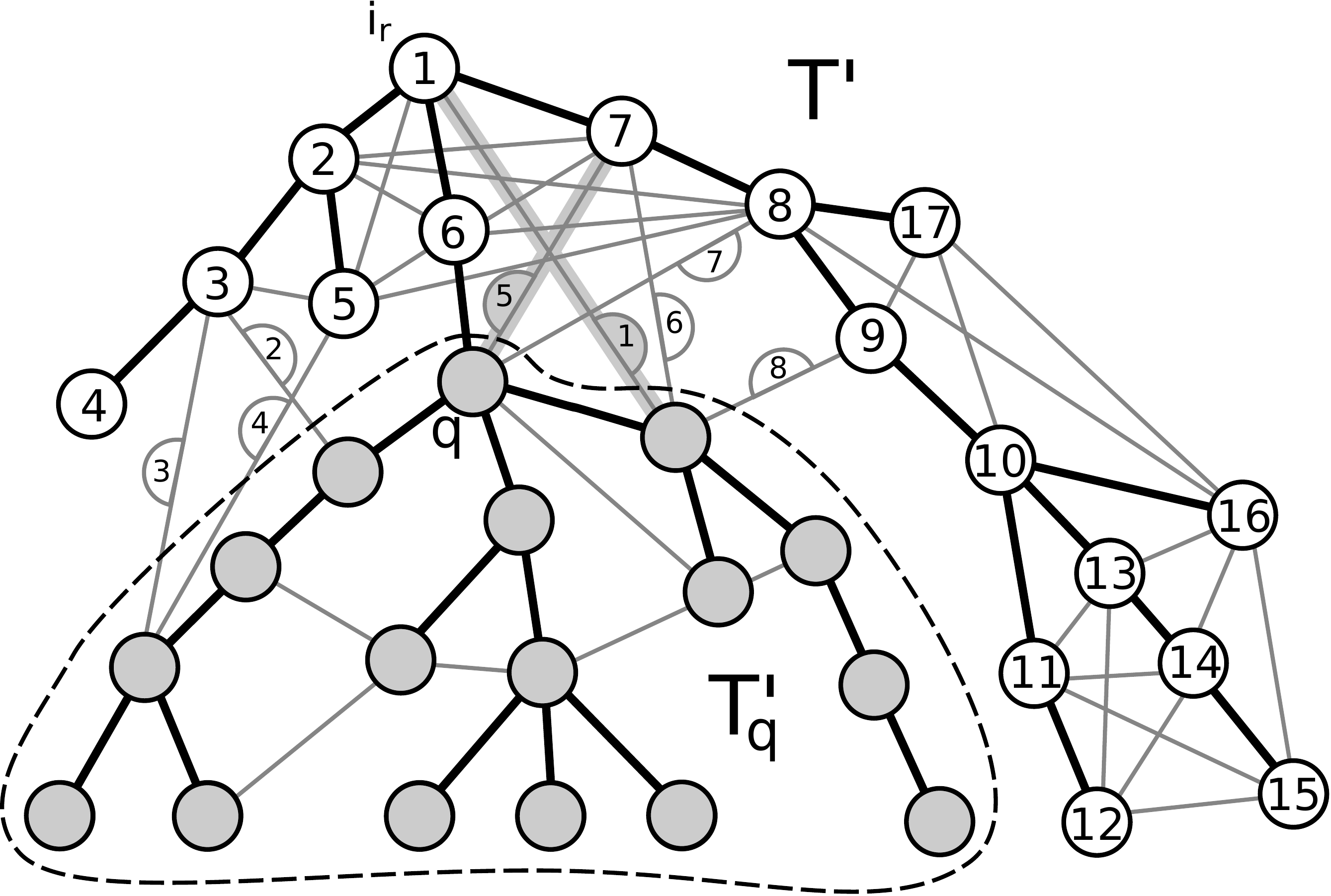}
\end{center}
\caption{
\label{f:edgepartition}
An illustration of how \ep\ operates and how the \alg\ prediction rule uses the output of \ep. 
Notation is as in the main text.
In this example, \ep\ is invoked with parameters $T'_q,T',G, 3$. 
The nodes in $V_{T_q'}$ are grey. The edges of $E_{T'}$ are thick black.
A depth-first visit starting from $i_r$ is performed, and each node is numbered according to 
the usual visit order. 
%(for the sake of simplicity we consider a visit selecting, at each node visit, the leftmost branch
% that has not visited yet). 
Let $i_1,i_2, \ldots$ be the nodes in $V_{T'}\setminus V_{T_q'} $ (the white nodes), 
and $j_1,j_2, \ldots$ be the nodes of $V_{T_q'}$ (the gray nodes). 
The depth-first visit on the nodes induces an ordering on the $8$ edges of 
$E_G(T'_q,T')$  (i.e., the edges connecting grey nodes to white nodes), 
where $(i_1,j_1)$ precedes $(i_2,j_2)$ if $i_1$ is visited before $i_2$,
%FV
being $i_1,i_2, \ldots$ and $j_1,j_2, \ldots$ belonging to $V_{T' \setminus T'_q}$ and
   $V_{T'_q}$ respectively.
The numbers tagging the edges of $E_G(T'_q,T')$ denote a possible edge ordering.  
In the special case when two or more edges are incident to the same white node, 
the relative order for this edge subset is arbitrary. This way of ordering edges 
is then used by \ep\ for building the partition. Since $\rho = 3$, 
%
% according to the value of parameter $\rho$, which in this example is equal to $3$. 
%
\ep\ partitions $E_G(T'_q,T')$ in $8/(3+1) = 2$ sheaves, 
the first one containing edges tagged $1, 2, 3, 4$ and the second one with edges
$5, 6, 7, 8$. Finally, from each sheaf an arbitrary edge is queried. A possible selection, shown by grey shaded edges, is edge $1$ for the first sheaf
and edge $5$ for the second one. The remaining edges are assigned to the test set.
For example, during the prediction phase, test edge $(19,26)$ 
is predicted as $\hat{Y}_{19,26} \leftarrow {Y}_{19,7}{Y}_{7,24}{Y}_{24,25}{Y}_{25,26}$,
where ${Y}_{19,7}$, ${Y}_{24,25}$, and ${Y}_{25,26}$ are available since they belong to
the initial spanning tree $T'$.
}
\end{figure}

\begin{lemma}\label{l:tpart}
Let $T'$ be any subtree of $T$, rooted at (an arbitrary) vertex $i_r \in V_{T'}$.
If for any node $v \in V_{T'}$ we have $|E_G(T'_v,T')| \le \theta$, then
\tp$(T',G, \theta)$ returns $T'$ itself; 
otherwise, \tp$(T',G, \theta)$ returns a proper subtree $T'_j \subset T'$ satisfying
\begin{enumerate}
\item [(i)] $|E_G(T'_v,T')| \le \theta$ for each node $v \not\equiv j$ in $V_{T'_j}$, and
\item [(ii)] $|E_G(T_j',T')| \ge \theta$.
\end{enumerate}
\end{lemma}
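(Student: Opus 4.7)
The plan is to establish an invariant that fully characterizes the records $R_j$ built by \tp, and then read off both parts of the lemma from a case analysis on whether the termination condition is ever triggered before reaching the root $i_r$.

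First, I would prove the invariant that whenever \tp\ finishes computing $R_j$ for a node $j \in V_{T'}$, we have $R_j = E_G(T'_j, T')$. This is by induction on the post-order in which the DFS first completes the record at a node. For a leaf $j$, the subtree $T'_j$ consists of $j$ alone, so the non-$T'$ edges leaving $V_{T'_j}$ into $V_{T'}$ are precisely the edges of $G$ incident to $j$ that go to other nodes of $T'$ and are not the tree edge to $j$'s parent; this is exactly what the algorithm places in $R_j$. For an internal node $j$ with children $j'_1,\dots,j'_k$, the vertex set $V_{T'_j}$ is the disjoint union of $\{j\}$ and the $V_{T'_{j'_s}}$. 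An edge in $E_G \setminus E_{T'}$ leaves $V_{T'_j}$ into $V_{T'} \setminus V_{T'_j}$ iff it is either a non-tree edge from $j$ itself going out of $V_{T'_j}$, or a non-tree edge from some $V_{T'_{j'_s}}$ going out of $V_{T'_j}$. By induction, $\bigcup_s R_{j'_s}$ contains every non-tree edge leaving some $V_{T'_{j'_s}}$; the only such edges that stay inside $V_{T'_j}$ are those linking two different subtrees $T'_{j'_s}$ and $T'_{j'_t}$, which the algorithm explicitly excises. Combining with $j$'s own non-tree edges leaving $V_{T'_j}$ (tagged when $j$ is first visited and folded in at the last visit) then yields exactly $E_G(T'_j, T')$.

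Assuming the invariant, the lemma falls out of the DFS termination behavior. If $|E_G(T'_v,T')|\le\theta$ for every $v \in V_{T'}$, then $|R_v|\le\theta$ for every node ever recorded, so the test $|R_j|\ge\theta$ is never passed strictly before reaching the root, and the DFS returns only upon computing $R_{i_r}$, at which point \tp\ returns all of $T'$. Otherwise there exists some $v$ with $|E_G(T'_v,T')|>\theta$, so because the DFS completes $R_v$ at every node $v \in V_{T'}$ (leaves at first visit, internal nodes at their last backtracking visit) before completing it at $v$'s parent, there must be a first node $j \in V_{T'}$ at which $|R_j|\ge\theta$ is detected and the algorithm halts returning $T'_j$. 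By the invariant and the termination test, $|E_G(T'_j,T')| = |R_j| \ge \theta$, giving part (ii). For part (i), every $v \ne j$ in $V_{T'_j}$ is a strict descendant of $j$, so $R_v$ was completed strictly before $R_j$; since the algorithm did not stop at $v$, we must have had $|R_v|<\theta$, and the invariant gives $|E_G(T'_v,T')| = |R_v| < \theta$.

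The main obstacle is the invariant, specifically making the bookkeeping for internal nodes precise: one has to verify that the set of edges "connecting the subtrees $T_{j'}$ to each other" that the algorithm removes is exactly the set of edges which appear in the union $\bigcup_s R_{j'_s}$ but are internal to $V_{T'_j}$, and that $j$'s own non-tree edges leaving $V_{T'_j}$ are correctly incorporated. The remaining case analysis and the deduction of (i)--(ii) from the termination test are then essentially immediate, with the only subtlety being the observation that the DFS post-order guarantees every descendant of $j$ has its record already finalized (and found below threshold) by the time $j$ is evaluated.
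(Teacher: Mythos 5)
Your proof is correct and follows essentially the same route as the paper's: both rest on the fact that records are finalized in post-order (descendants before ancestors), so the algorithm halts at the first node whose cut size reaches $\theta$, which yields (ii) directly and (i) because every proper descendant's record was already checked and found below threshold. The only difference is that you make explicit, via induction, the invariant $R_j = E_G(T'_j,T')$ that the paper treats as immediate from the definition of \tp, and you phrase (i) directly rather than by contradiction; neither changes the substance of the argument.
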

\begin{proof}
The proof immediately follows from the definition of \tp$(T',G, \theta)$. 
If $|E_G(T'_v,T')| \le \theta$ holds for each node $v \in V_{T'}$, 
then \tp$(T',G, \theta)$ stops only after all nodes of $V_{T'}$ have
been visited, therefore returning the whole input tree $T'$. 
On the other hand, when  $|E_G(T'_v,T')| \le \theta$ does not hold 
for each node $v \in V_{T'}$, \tp$(T',G, \theta)$ returns a proper subtree 
$T'_j \subset T'$ and, by the very way this subroutine works, we must have
$|E_G(T_j',T')| \ge \theta$. This proves (ii). In order to prove (i), assume,
for the sake of contradiction, that there exists
a node $v \not\equiv j$ of $V_{T'_j}$ such that $|E_G(T'_v,T')| > \theta$. Since
$v$ is a descendent of $j$,  the last time when $v$ gets visited precedes the last time when
$j$ does, thereby implying that
\tp\ would stop at some node $z$ of $V_{T'_v}$, which would make \tp$(T',G, \theta)$
return $T'_z$ instead of $T'_j$.
\end{proof}

Let $\scC(T'_q,T',G,\rho) \subseteq \scC(G)$ be the set of circuits used during 
the prediction phase that have been obtained thorugh the sheaves 
$\{E_1, E_2, \ldots\}$ returned by $\ep(T'_q,T',G,\rho)$. The following lemma quantifies the resulting load increase of the edges in $E_{T'} \setminus E_{T'_q}$.
\begin{lemma}\label{l:loadincrease}
Let $T'$ be any subtree of $T$, rooted at (an arbitrary) vertex $i_r \in V_{T'}$.
Then the load increase of each edge in $E_{T'} \setminus E_{T'_q}$ resulting from
using at prediction time the circuits contained in $\scC(T'_q,T',G,\rho)$ 
is $\scO(\rho)$.
\end{lemma}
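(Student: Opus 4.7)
The plan is to exploit the depth-first ordering used by $\ep$ to show that only a bounded number of sheaves can route their circuits through any given edge $e \in E_{T'} \setminus E_{T'_q}$, and then to bound each such sheaf's contribution separately. Fix such an edge $e$, and consider any sheaf $E_s$ with queried edge $(i_s,j_s)$ and a test edge $(i',j') \in E_s \setminus \{(i_s,j_s)\}$, where $j_s, j' \in V_{T'}\setminus V_{T'_q}$. The circuit associated with $(i',j')$ is $\path_T(i',i_s)\to(i_s,j_s)\to\path_T(j_s,j')$; the first sub-path lies inside $T'_q$ and the non-tree edges $(i_s,j_s)$ and $(i',j')$ are not in $E_{T'}$, so the only portion that intersects $E_{T'}\setminus E_{T'_q}$ is $\path_{T'\setminus T'_q}(j_s,j')$. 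Hence the load contribution of $E_s$ to $e$ is exactly the number of test edges in $E_s$ whose endpoint $j'$ lies on the opposite side of $e$ from $j_s$ in $T'\setminus T'_q$.

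First I would identify which sheaves can contribute at all. Let $S_e$ be the set of nodes in the component of $(T'\setminus T'_q)\setminus\{e\}$ that does \emph{not} contain $i_r$, i.e., the ``subtree below'' $e$. Since $S_e$ induces a connected subtree of $T'\setminus T'_q$, all its nodes are visited during a single contiguous time interval of the depth-first traversal from $i_r$ performed inside $\ep$. Because $\ep$ enumerates $E_G(T'_q,T')$ by first-visit time of the white endpoints (with ties broken arbitrarily), the edges whose white endpoint lies in $S_e$ are assigned a contiguous block of indices $\{a,a+1,\ldots,b\}$, while all other edges of $E_G(T'_q,T')$ receive indices strictly below $a$ or strictly above $b$.

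Next, for $E_s$ actually to route a circuit through $e$, its index range must contain both an edge with white endpoint in $S_e$ and one with white endpoint outside $S_e$; equivalently, it must straddle the boundary at index $a$ or the boundary at $b+1$. Since the sheaves partition the indices into disjoint contiguous ranges of length $\rho+1$ (with the last one of length at most $2\rho$), at most one sheaf straddles each of these two boundaries, yielding \emph{at most two} sheaves that can contribute to the load of $e$. Each contributing sheaf has at most $2\rho-1$ test edges, and each test edge contributes at most $1$ to the load of $e$ (since $e$ appears at most once in a simple tree path). The total load increase on $e$ is therefore at most $2(2\rho-1)=\scO(\rho)$, which is the desired bound.

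The main obstacle is justifying the contiguity of $S_e$ in $\ep$'s enumeration; this hinges on the standard observation that removing a single edge from a tree yields two connected components, one of which is entirely explored during a single recursive call of the DFS rooted outside it, so its white nodes occupy a single interval of first-visit times. Once this is in place, the sheaf-to-boundary correspondence is purely combinatorial and the final bound is immediate. Note also that the argument is insensitive to the potentially anomalous size of the last sheaf: its size being $\scO(\rho)$ suffices to preserve the per-sheaf contribution bound.
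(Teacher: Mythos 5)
Your argument is correct and establishes the lemma, but it reaches the bound by a route that is dual to the paper's. The paper fixes a sheaf and reasons on the line graph $L$ generated by the depth-first traversal of the tree (each tree edge is appended when traversed forward and again when backtracked, so it occurs at most twice in $L$); to each sheaf it associates the sub-line of $L$ between the first occurrences of the nodes incident to the sheaf's first and last edges, argues that the circuits covering that sheaf load only tree edges occurring in that sub-line, and concludes that, the sub-lines of distinct sheaves being disjoint while every tree edge has at most two occurrences in $L$, each edge of $E_{T'}\setminus E_{T'_q}$ is loaded by at most two sheaves, each contributing $\scO(\rho)$ circuits. You instead fix the edge $e$ and count the sheaves that can load it: the subtree of $T'\setminus T'_q$ hanging below $e$ is discovered in one contiguous interval of \ep's depth-first visit, so the edges of $E_G(T'_q,T')$ whose white endpoint lies below $e$ occupy a contiguous block of the numbering, and only the at most two sheaves straddling the block's boundaries can route a circuit across $e$, each contributing at most $2\rho-1$ circuits, for an explicit total of $4\rho-2$. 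Both proofs rest on the same locality property of the depth-first ordering; yours is somewhat more elementary and self-contained, dispenses with the Euler-tour/line-graph bookkeeping, and produces an explicit constant, at the cost of having to justify the discovery-interval property (which you do) and the facts that $T'\setminus T'_q$ is connected and that the tree portion of each circuit, namely the path between the two white endpoints, lies entirely inside it --- both immediate since $T'_q$ is an entire rooted subtree whose root differs from $i_r$, but worth one explicit sentence in a polished write-up.
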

\begin{proof}
Fix a sheaf $E'$ and any circuit $C \in \scC(T'_q,T', G, \rho)$ 
containing the unique queried edge of $E'$, and $C'$ be the part of $C$ that belongs to $T' \setminus T'_q$. 
We know that the edges of $C'$ are potentially loaded by all circuits needed to cover the sheaf, 
which are $\scO(\rho)$. We now check that no more than $\scO(\rho)$ additional circuits use those edges.
Consider the line graph $L$ created by the depth first visit of $T$ starting from $i_r$. 
Each time an edge $(i,j)$ is traversed (even in a backtracking step), the edge is appended to 
$L$, and $j$ becomes the new terminal node of $L$. Hence, each backtracking step generates in $L$ 
at most one duplicate of each edge in $T$, while the nodes in $T$ may be duplicated several times in $L$.
Let $i_{\min}$ and $i_{\max}$ be the nodes of $V_{T'_q} \setminus V_{T'}$ 
incident to the first and the last edge, respectively, 
assigned to sheaf $E'$ during the visit of $T$, where the order is meant to be chronological.
Let $\ell_{\min}$ and $\ell_{\max}$ be the first occurrence of $i_{\min}$ and $i_{\max}$
in $L$, respectively, when traversing $L$ from the first node inserted. 
Let $L'$ be the sub-line of $L$ having $\ell_{\min}$ and $\ell_{\max}$ as terminal nodes. 
By the way \ep\ is defined, all edges of $C'$ that are loaded by circuits covering $E'$ must also 
occur in $L'$. Since each edge of $T$ occurs at most twice in $L$, each edge of $C'$ belongs to 
$L'$ and to at most another sub-line $L''$ of $L$ associated with a different sheaf $E''$. 
Hence the overall load of each edge in $C'$ is $\scO(\rho)$.
% which concludes the first part of proof.
\end{proof}

We are now ready to bound the number of mistakes made when \alg\ is run on any labeled graph $(G,Y)$. 
\begin{lemma}\label{l:alg}
The load of each queried edge selected by running $\alg(\rho,\theta)$ 
on any labeled graph $(G = (V_G,E_G),Y)$ 
is $\scO\left(\rho \Bigl(\frac{|E_G|-|V_G|+1}{\theta}+\theta\Bigr)\right)$.
%, while we have $|TrSet|-(|V_G|-1)\le \frac{|TeSet|}{r}$.
\end{lemma}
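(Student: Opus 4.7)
The queried edges of $\alg(\rho,\theta)$ are the $|V_G|-1$ edges of the initial spanning tree $T$ (Line~1) together with one arbitrary edge selected from each sheaf returned by \ep\ (Line~7). A sheaf-queried edge $(i,j)$ belongs only to the (at most $2\rho$) circuits built from the other edges of its own sheaf, so its load is trivially $\scO(\rho)$ and is absorbed into the claimed bound. Hence the crux of the proof is to control the load of an arbitrary edge $e\in E_T$, which I will do by summing the load contributions $e$ receives across the iterations of the do-while loop.

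For each iteration $t$, let $T^{(t)}$ and $T_q^{(t)}$ denote the current values of the corresponding variables, and let $t^*$ be the unique iteration with $e\in E_{T_q^{(t^*)}}$ (after $t^*$, $e$ is deleted in Line~10). For $t>t^*$, $e$ no longer exists in $T$ and contributes nothing. For $t<t^*$, $e\in E_{T^{(t)}}\setminus E_{T_q^{(t)}}$; since $T_q^{(t)}$ is a connected subgraph of $T^{(t)}$, every internal-circuit path $\pi_{T^{(t)}}(i,j)$ with $i,j\in V_{T_q^{(t)}}$ stays inside $T_q^{(t)}$ and therefore does not contain $e$, so the only load $e$ incurs at such an iteration comes from the sheaf circuits and is bounded by $\scO(\rho)$ by Lemma~\ref{l:loadincrease}. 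At $t=t^*$, write $e=(u,v)$ with $u$ the parent of $v$ in $T_q^{(t^*)}$: the internal circuits through $e$ are in bijection with the edges of $G$ crossing the cut $(V_{T_v^{(t^*)}},\,V_{T_q^{(t^*)}}\setminus V_{T_v^{(t^*)}})$, whose number is at most $|E_G(T_v^{(t^*)},T^{(t^*)})|\le\theta$ by Lemma~\ref{l:tpart}(i) (note that $v$, being a child, is never the root of $T_q^{(t^*)}$); the sheaf circuits of iteration $t^*$ add at most a further $\scO(\rho\theta)$ load to $e$, as asserted in the discussion immediately preceding Lemma~\ref{l:loadincrease}. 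Hence iteration $t^*$ contributes $\scO(\rho\theta)$ in total.

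It remains to bound the number of iterations. A short case analysis on the two iterations $t_a,t_b$ in which the endpoints of a non-tree edge $(a,b)$ are removed shows that $(a,b)$ lies in $E_G(T_q^{(t)},T^{(t)})$ for at most one value of $t$, so the sets $\{E_G(T_q^{(t)},T^{(t)})\}_t$ are pairwise disjoint and have total size at most $|E_G|-|V_G|+1$. By Lemma~\ref{l:tpart}(ii), every iteration with $T_q^{(t)}\neq T^{(t)}$ contributes at least $\theta$ to this sum, while the exceptional iteration with $T_q^{(t)}=T^{(t)}$ empties $T$ and therefore terminates the loop, so it occurs at most once. The total number of iterations is thus $\scO\bigl((|E_G|-|V_G|+1)/\theta\bigr)$, and combining with the per-iteration bounds of the previous paragraph gives
\[
\scO\!\left(\rho\cdot\tfrac{|E_G|-|V_G|+1}{\theta}\right)+\scO(\rho\theta) \;=\; \scO\!\left(\rho\!\left(\tfrac{|E_G|-|V_G|+1}{\theta}+\theta\right)\right),
\]
which is the claimed bound. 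The main obstacle is the bookkeeping needed to lift the single-call guarantees of Lemma~\ref{l:loadincrease} and of the pre-algorithm load-increase statement into a cumulative bound across all iterations; once the pairwise disjointness of the cut-edge sets is established, the remainder of the argument is a direct summation.
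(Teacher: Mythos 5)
Your proof is correct and follows essentially the same route as the paper's: the paper splits the load of a tree edge into three contributions ($L'$ from within-$T_q$ circuits, $L''$ from sheaf circuits when the edge lies in the current $T_q$, $L'''$ from sheaf circuits when it lies outside), bounds the first two by $\theta$ and $\scO(\rho\theta)$ via Lemma~\ref{l:tpart}(i) and the third by $\scO(\rho)$ per \ep\ call via Lemma~\ref{l:loadincrease}, and caps the number of calls by $(|E_G|-|V_G|+1)/\theta$ using disjointness and Lemma~\ref{l:tpart}(ii) — exactly the ingredients you use, merely re-bucketed by iteration ($t<t^*$ versus $t=t^*$) instead of by contribution type. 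Your handling of the sheaf-queried edges and of the single exceptional iteration with $T_q\equiv T$ matches (and slightly sharpens) the paper's argument, so no gap.
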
\label{l:load}
\begin{proof}
In the proof we often refer to the line numbers of the pseudocode in Figure~\ref{f:algsimple}.
We start by observing that the subtrees returned by the calls to \tp\ are disjoint 
(each $T_q$ is removed from $T$ in line~$10$). This entails that \ep\ is called on disjoint subsets of $E_G$, 
which in turn implies that the sheaves returned by calls to \ep\ are also disjoint. Hence, for each training 
edge $(i,j) \in E_G \setminus E_T$ (i.e., selected in line $7$), 
we have $L_{i,j} = \scO(\rho)$. This is because the number of circuits in $\scC(G)$ 
that include $(i,j)$ is equal to the cardinality of the sheaf to which $(i,j)$ belongs 
(lines $8$ and $9$).

We now analyze the load $L_{v,w}$ of each edge $(v,w) \in E_T$. This quantity can be viewed
as the sum of three distinct load contributions: $L_{v,w} = L_{v,w}'+L_{v,w}''+L_{v,w}'''$. 
The first term $L_{v,w}'$ accounts for the load created in line $4$, when $v$ and $w$ belong to 
the same subtree $T_q$ returned by calling \tp\ in line $3$. The other two terms $L_{v,w}''$ 
and $L_{v,w}'''$ take into account the load created in line $9$, when either
$v$ and $w$ both belong ($L_{v,w}''$) or do not belong ($L_{v,w}'''$) 
to the subtree $T_q$ returned in line $3$.

Assume now that both $v$ and $w$ belong to $V_{T_q}$ with $T_q$ returned in line $3$. 
Without loss of generality, let $v$ be the parent of $w$ in $T$. The load contribution $L_{v,w}'$ 
deriving from the circuits in $\scC(G)$ that are meant to cover the test edges joining pairs 
of nodes of $T_q$ (line $4$) must then be bounded by $|E_G(T_w,T_q)|$. This quantity can in turn be 
bounded by $\theta$ using part~(i) of Lemma~\ref{l:tpart}. Hence, we must have $L_{v,w}' \le \theta$.

Observe now that $L_{v,w}''$ may increase by one each time line $9$ is executed. 
This is at most $\scO(\rho) \times |E_G(T_w,T_q)|$. 
Since $|E_G(T_w,T_q)| \le \theta$ by part (i) of Lemma~\ref{l:tpart}, 
we must have $L_{v,w}'' = \scO(\rho\theta)$.

We finally bound the load contribution $L_{v,w}'''$. As we said, this refers to the load created 
in line~$9$ when neither $v$ nor $w$ belong to subtree $T_q$ returned in line $3$. 
Lemma~\ref{l:loadincrease} ensures that, for each call of \ep, $L_{v,w}'''$  
gets increased by $\scO(\rho)$. We then bound the number of times when \ep\ may be called. 
Observe that $|E_G(T_q,T)| \ge \theta$ for each subtree $T_q$ 
returned by $\tp$. Hence, because different calls to \ep\ operate on disjoint edge subsets, 
the number of calls to \ep\ must be bounded by the number of calls to \tp. 
The latter cannot be larger than $\tfrac{|E_G|-|V_G|+1}{\theta}$ which, in turn, implies 
$L_{v,w}''' = \scO\Bigl(\tfrac{\rho}{\theta}(|E_G|-|V_G|+1)\Bigr)$.

Combining together, we find that 
\[
L_{v,w} = L_{v,w}'+L_{v,w}''+L_{v,w}''' 
= 
\scO\Bigl(\theta + \rho\theta + \frac{\rho}{\theta}(|E_G|-|V_G|+1)\Bigr)
\] 
thereby concluding the proof.
\end{proof}
The value of threshold $\theta$ that minimizes the above upper bound is $\theta = \sqrt{|E_G|-|V_G|+1}$,
as exploited next.
\begin{lemma}\label{t:alg}
The number of mistakes made by $\alg(\rho,\sqrt{|E_G|-|V_G|+1})$ on a labeled graph $(G= (V_G,E_G),Y)$ 
is $\scO\Bigl(\Delta_2(Y)\,\rho\,\sqrt{|E_G|-|V_G|+1}\Bigr)$, while we have
$\frac{|\scC(G)|}{Q - |V_G|+1} \ge \rho$, where $Q$ is the size
of the chosen query set (excluding the initial $|V_G|-1$ labels), 
and $|\scC(G)|$ is the size of the test set.
\end{lemma}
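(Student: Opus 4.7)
The plan is to derive the mistake bound directly from Lemma \ref{l:alg} by optimizing the threshold parameter, and then to read off the test-to-query ratio from the sheaf construction.

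First, I would observe that setting $\theta = \sqrt{|E_G|-|V_G|+1}$ balances the two terms in the load bound of Lemma \ref{l:alg}, yielding
\[
    \max_{(i,j) \in E_G} L_{i,j} \;=\; \scO\!\left(\rho\sqrt{|E_G|-|V_G|+1}\right).
\]
Next, I would bound the number of mistakes in terms of the $\delta$-edges. By construction, every test edge $(i',j')$ is predicted by the sign product $\pi_{C_{i',j'}}(i',j')$ along some circuit $C$; this prediction matches $Y_{i',j'}$ whenever the full circuit $C$ contains an even number of $\delta$-edges. Therefore a prediction mistake on $(i',j')$ forces at least one $\delta$-edge into the circuit used to predict $(i',j')$. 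Charging each mistake to such a $\delta$-edge gives
\[
    M \;\le\; \sum_{(i,j)\in E_\Delta} L_{i,j} \;\le\; |E_\Delta|\,\cdot\,\max_{(i,j)\in E_G} L_{i,j} \;=\; \scO\!\left(\Delta_2(Y)\,\rho\,\sqrt{|E_G|-|V_G|+1}\right),
\]
using the fact that $|E_\Delta| = \Delta_2(Y)$ by definition of the optimal two-clustering.

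For the ratio claim, I would simply count what \alg\ puts into the query set and into the test set. The query set consists of the $|V_G|-1$ edges of the initial spanning tree $T$ (Line 1) together with exactly one edge per sheaf returned by \ep\ (Line 7); hence $Q - |V_G| + 1$ equals the total number of sheaves $s$ produced across all iterations of the do-while loop. Each sheaf contains at least $\rho+1$ edges, of which exactly one is queried and at least $\rho$ go into the test set, so the sheaves alone contribute at least $\rho s$ edges to $|\scC(G)|$. Since the within-$T_q$ test edges (Line 4) only enlarge the numerator further, we obtain $|\scC(G)|/(Q-|V_G|+1) \ge \rho s / s = \rho$, as required.

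The only step that requires genuine care is the charging argument for mistakes: one must check that each test edge is indeed predicted using a single circuit (so that its contribution to the total load is at most its own load), and that a $\delta$-edge lying in the query set can corrupt at most as many predictions as the number of test-covering circuits containing it, i.e.\ exactly $L_{i,j}$. Both facts, however, follow directly from the structure of the circuit covering built by \alg, so the main effort really lies in the load analysis already completed in Lemma \ref{l:alg}.
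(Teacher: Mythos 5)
Your proposal is correct and follows essentially the same route as the paper: plug $\theta = \sqrt{|E_G|-|V_G|+1}$ into the load bound of Lemma~\ref{l:alg}, charge each mistake to a $\delta$-edge via its load (using $|E_\Delta| = \Delta_2(Y)$), and obtain the ratio claim by counting the $|V_G|-1$ spanning-tree queries plus one queried edge per sheaf of size at least $\rho+1$. Your write-up just spells out the charging and counting steps that the paper treats as immediate.
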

\begin{proof}
The condition $\frac{|\scC(G)|}{Q - |V_G|+1} \ge \rho$ 
immediately follows from:
\begin{itemize}
\item [(i)] the very definition of \ep, which selects one queried edge per sheaf,
the cardinality of each sheaf being not smaller than $\rho+1$, and 
\item [(ii)] the very definition of \alg, which queries the labels of 
$|V_G|-1$ edges when drawing the initial spanning tree $T$ of $G$.
\end{itemize}
As for the mistake bound, recall that for each queried edge $(i,j)$, 
the load $L_{i,j}$ of $(i,j)$ is defined to be the number of circuits of $\scC(G)$ 
that include $(i,j)$. As already pointed out, each $\delta$-edge cannot yield more mistakes
than its load. Hence the claim simply follows from 
Lemma \ref{l:alg}, and the chosen value of $\theta$.
\end{proof}

\begin{proof}[Theorem \ref{t:algb}]
In order to prove the condition $\frac{|\scC(G)|}{Q} \ge \frac{\rho-3}{3}$, 
it suffices to consider that:
\begin{itemize}
\item [(i)] a spanning forest
containing at most $|V_G|-1$ queried edges is drawn at each 
execution of the do-while loop in Figure~\ref{f:algfinal},
and
\item [(ii)] \ep\  queries one edge per sheaf,
where the size of each sheaf is not smaller than $\rho+1$.
Because  $\frac{|E_G|}{\rho|V_G|}+1$ bounds the number of do-while loop executions,
we have that the number of queried edges is bounded by
\begin{align*}
(|V_G|-1) \Bigl(\frac{|E_G|}{\rho|V_G|}+1\Bigr) 
&+ 
\frac{1}{\rho+1}\Bigl(|E_G|-(|V_G|-1) \Bigl(\frac{|E_G|}{\rho|V_G|}+1\Bigr)\Bigr)\\
&\le 
\frac{|E_G|}{\rho}+|V_G|+\frac{|E_G|}{\rho}\Bigl(1-\frac{|V_G|-1}{\rho|V_G|}-\frac{|V_G|-1}{|E_G|}\Bigr)\\ 
&\le
|V_G|+2\frac{|E_G|}{\rho} \\
&\le 
3 \frac{|E_G|}{\rho}
\end{align*}
where in the last inequality we used $\rho \leq \frac{|E_G|}{|V_G|}$.
Hence $\frac{|\scC(G)|}{Q} = \frac{|E_G|-Q}{Q} \geq \frac{\rho-3}{3}$, as claimed.
\end{itemize}
Let us now turn to the mistake bound.
Let $V_{G''}$ and $E_{G''}$ denote the node and edge sets of the connected components $G''$ 
of each subgraph $G' \subseteq G$ on which $\algb(\rho)$ invokes $\alg(\rho,\sqrt{E'})$.
By Lemma\ \ref{l:alg}, we know that the load of each queried edge selected
by \alg\ on $G''$ is bounded by 
\begin{align*}
\scO\biggl(\rho \biggl(\frac{|E_{G''}|-|V_{G''}|+1}{\sqrt{E'}}+\sqrt{E'}\biggr)\biggr)
& = \scO( \rho \sqrt{|E'|})
= \scO\Bigl( \rho^{\frac{3}{2}} \sqrt{|V_G|}\Bigr)
\end{align*}
the first equality deriving from 
$|E_{G''}|-|V_{G''}|+1 \le |E'|$, and the second one from 
$|E'| \le \rho|V_G|$ (line 3 of \algb's pseudocode).

In order to conclude the proof, we again use the fact that any $\delta$-edge 
cannot originate more mistakes than its load, along with the observation that
the edge sets of the connected component $G''$
on which \alg\ is run are pairwise disjoint.
\end{proof}


\begin{thebibliography}{23}
 
\bibitem {BBC04}
A~Blum, N.~Bansal, and S.~Chawla.
\newblock Correlation clustering.
\newblock \emph{Machine Learning Journal}, 56\penalty0 (1/3):\penalty0 89--113,
  2004.

\bibitem {Bol86}
B.~Bollobas.
\newblock \emph{Combinatorics}.
\newblock Cambridge University Press, 1986.

\bibitem {ch56}
D.~Cartwright and F.~Harary.
\newblock Structure balance: A generalization of {H}eider's theory.
\newblock \emph{Psychological review}, 63\penalty0 (5):\penalty0 277--293,
  1956.

\bibitem {CGV09a}
N.~Cesa-Bianchi, C.~Gentile, and F.~Vitale.
\newblock Fast and optimal prediction of a labeled tree.
\newblock In \emph{Proceedings of the 22nd Annual Conference on Learning
  Theory}. Omnipress, 2009.

\bibitem {CGVZ10a}
N.~Cesa-Bianchi, C.~Gentile, F.~Vitale, and G.~Zappella.
\newblock Random spanning trees and the prediction of weighted graphs.
\newblock In \emph{Proceedings of the 27th International Conference on Machine
  Learning}. Omnipress, 2010.

\bibitem{CGVZ10b}
N.~Cesa-Bianchi, C.~Gentile, F.~Vitale, and G. Zappella.
Active learning on trees and graphs.
In {\em Proceedings of the 23rd Conference on Learning Theory (23rd COLT)}, 2010. 

\bibitem {cntd11}
K.~Chiang, N.~Natarajan, A.~Tewari, and I.~Dhillon.
\newblock Exploiting longer cycles for link prediction in signed networks.
\newblock In \emph{Proceedings of the 20th ACM Conference on Information and
  Knowledge Management (CIKM)}. ACM, 2011.

\bibitem {DEFI06}
E.D. Demaine, D.~Emanuel, A.~Fiat, and N.~Immorlica.
\newblock {Correlation clustering in general weighted graphs}.
\newblock \emph{Theoretical Computer Science}, 361\penalty0 (2-3):\penalty0
  172--187, 2006.

\bibitem {EP09}
R.~El-Yaniv and D.~Pechyony.
\newblock Transductive rademacher complexity and its applications.
\newblock \emph{Journal of Artificial Intelligence Research}, 35\penalty0
  (1):\penalty0 193--234, 2009.

\bibitem {EEST10}
M.~Elkin, Y.~Emek, D.A. Spielman, and S.-H. Teng.
\newblock Lower-stretch spanning trees.
\newblock \emph{SIAM Journal on Computing}, 38\penalty0 (2):\penalty0 608--628,
  2010.

\bibitem {fia11}
G.~Facchetti, G.~Iacono, and C.~Altafini.
\newblock Computing global structural balance in large-scale signed social
  networks.
\newblock \emph{PNAS}, 2011.

\bibitem {GG06}
I.~Giotis and V.~Guruswami.
\newblock {Correlation clustering with a fixed number of clusters}.
\newblock In \emph{Proceedings of the Seventeenth Annual ACM-SIAM Symposium on
  Discrete Algorithms}, pages 1167--1176. ACM, 2006.

\bibitem {GKRT04}
R.~Guha, R.~Kumar, P.~Raghavan, and A.~Tomkins.
\newblock {Propagation of trust and distrust}.
\newblock In \emph{Proceedings of the 13th international conference on World
  Wide Web}, pages 403--412. ACM, 2004.

\bibitem {ha53}
F.~Harary.
\newblock On the notion of balance of a signed graph.
\newblock \emph{Michigan Mathematical Journal}, 2\penalty0 (2):\penalty0
  143--146, 1953.

\bibitem {hei46}
F.~Heider.
\newblock Attidute and cognitive organization.
\newblock \emph{J. Psychol}, 21:\penalty0 107--122, 1946.

\bibitem {HP07}
M.~Herbster and M.~Pontil.
\newblock Prediction on a graph with the {P}erceptron.
\newblock In \emph{Advances in Neural Information Processing Systems 21}, pages
  577--584. MIT Press, 2007.

\bibitem {Hou05}
Y.P. Hou.
\newblock {Bounds for the least Laplacian eigenvalue of a signed graph}.
\newblock \emph{Acta Mathematica Sinica}, 21\penalty0 (4):\penalty0 955--960,
  2005.

\bibitem {KLB09}
J.~Kunegis, A.~Lommatzsch, and C.~Bauckhage.
\newblock {The Slashdot Zoo: Mining a social network with negative edges}.
\newblock In \emph{Proceedings of the 18th International Conference on World
  Wide Web}, pages 741--750. ACM, 2009.

\bibitem {LHK10}
J.~Leskovec, D.~Huttenlocher, and J.~Kleinberg.
\newblock {Signed networks in social media}.
\newblock In \emph{Proceedings of the 28th International Conference on Human
  Factors in Computing Systems}, pages 1361--1370. ACM, 2010a.

\bibitem {LHK10b}
J.~Leskovec, D.~Huttenlocher, and J.~Kleinberg.
\newblock {Predicting positive and negative links in online social networks}.
\newblock In \emph{Proceedings of the 19th International Conference on World
  Wide Web}, pages 641--650. ACM, 2010b.

\bibitem {Lit89a}
N.~Littlestone.
\newblock \emph{Mistake Bounds and Logarithmic Linear-threshold Learning
  Algorithms}.
\newblock PhD thesis, University of California at Santa Cruz, 1989.

\bibitem {LW94}
N.~Littlestone and M.K. Warmuth.
\newblock The weighted majority algorithm.
\newblock \emph{Information and Computation}, 108:\penalty0 212--261, 1994.

\bibitem {LP09}
R.~Lyons and Y.~Peres.
\newblock Probability on trees and networks.
\newblock Manuscript, 2009.

\bibitem{VCGZ12}
F.~Vitale, N.~Cesa-Bianchi, C.~Gentile, and G.~Zappella.
\newblock {See the tree through the lines: the {S}hazoo algorithm}.
\newblock In \emph{Proc. of the 25th Annual Conference on Neural
  Information Processing Systems}, pages 1584-1592. Curran Associates, 2012.

\bibitem {von07}
U.~Von~Luxburg.
\newblock {A tutorial on spectral clustering}.
\newblock \emph{Statistics and Computing}, 17\penalty0 (4):\penalty0 395--416,
  2007.

\end{thebibliography}
\end{document}